
\documentclass[review]{elsarticle}



\usepackage{amssymb}
\usepackage{amsthm}
\usepackage{amsmath}
\usepackage{subfigure}
\usepackage{hyperref}
\usepackage{algorithm}
\usepackage{algpseudocode}
\usepackage{booktabs}
\usepackage{caption}
\algdef{SE}[DOWHILE]{Do}{doWhile}{\algorithmicdo}[1]{\algorithmicwhile\ #1}%
\algtext*{EndWhile}
\algtext*{EndIf}
\algtext*{EndFor}
\algtext*{EndFunction}

\usepackage{url}

\newtheorem{lemma}{Lemma}
\newtheorem{theorem}{Theorem}
\newtheorem{definition}{Definition}
\newtheorem{corollary}{Corollary}
\newtheorem{proposition}{Proposition}

\algrenewcommand\algorithmicindent{0.8em}%

\journal{Pattern Recognition}


\DeclareCaptionFormat{algor}{%
  \hrulefill\par\offinterlineskip\vskip1pt%
    \textbf{#1#2}#3\offinterlineskip\hrulefill}
\DeclareCaptionStyle{algori}{singlelinecheck=off,format=algor,labelsep=space}
\captionsetup[algorithm]{style=algori}

\begin{document}

\begin{frontmatter}
\title{Approximately Optimal Binning for the Piecewise Constant Approximation of the Normalized Unexplained Variance (nUV) Dissimilarity Measure}

\author[1]{Attila Fazekas\corref{cor1}%
}
\ead{attila.fazekas@inf.unideb.hu}
\author[2]{Gy\"orgy Kov\'acs}
\ead{gyuriofkovacs@gmail.com}

\cortext[cor1]{Corresponding author}
\address[1]{University of Debrecen, Faculty of Informatics, PO Box 400, Debrecen 4002, Hungary}
\address[2]{Analytical Minds Ltd., \'Arp\'ad street 5, Beregsur\'any 4933, Hungary}




\begin{abstract}
The recently introduced Matching by Tone Mapping (MTM) dissimilarity measure enables template matching under smooth non-linear distortions and also has a well-established mathematical background. MTM operates by binning the template, but the ideal binning for a particular problem is an open question. By pointing out an important analogy between the well known mutual information (MI) and MTM, we introduce the term \emph{normalized unexplained variance} (nUV) for MTM to emphasize its relevance and applicability beyond image processing. Then, we provide theoretical results on the optimal binning technique for the nUV measure and propose algorithms to find approximate solutions. The theoretical findings are supported by numerical experiments. Using the proposed techniques for binning shows 4-13\% increase in terms of AUC scores with statistical significance, enabling us to conclude that the proposed binning techniques have the potential to improve the performance of the nUV measure in real applications.
\end{abstract}

\begin{keyword}
dissimilarity\sep template matching\sep matching by tone mapping\sep optimal binning\sep normalized unexplained variance
\MSC[2010] 41A05\sep 41A10\sep 65D05\sep 65D18  
\end{keyword}


\end{frontmatter}


\section{Introduction}

One of the most general goals of pattern recognition is to distinguish noisy and/or distorted realizations of some patterns of interest (POI) from realizations of other patterns and noise. A common point of most approaches is that they explicitly or implicitly define/quantify the problem-specific notion of \emph{similarity} (or the inversely proportional dissimilarity) of patterns, usually through (dis)similarity measures. The role of (dis)similarity measures depends on how the available knowledge and the POI are represented, by labeled datasets (leading to machine learning approaches) or by exact models (leading to template matching).

\emph{Machine learning approaches.} If there is a hand-labeled dataset containing multiple realizations of the POI and counterexamples, one can exploit general-purpose machine learning techniques to address recognition \cite{deeprecognition} tasks directly. Although some techniques like \emph{metric learning} \cite{metriclearning} approaches learn the most suitable (dis)similarity measure explicitly, many of the commonly used regressors and classifiers use relatively simple (dis)similarity measures (for example, \emph{Euclidean distance} in k-Nearest Neighbors (kNN) \cite{machinelearning} and kernel functions in Support Vector Machines (SVM) \cite{machinelearning}).
One of the main reasons why advanced (dis)similarity measures rarely appear in general-purpose machine learning techniques is that they make assumptions on the distribution of the data, and these assumptions are likely to fail in general problems when one has no information about the possible distortions.  
Instead, machine learning techniques learn the application-specific meaning of (dis)similarity from the data and represent the advanced concepts of (dis)similarity in the inner structure of the machine learning model in terms of the simple measures.


\emph{Template matching approaches.} There are numerous problems with no hand-labeled datasets, but one high-quality realization or exact model of the POI. In these cases, one declares the (dis)similarity measure to be used according to the expected distortions of the POI, and considers sufficiently similar patterns as a realization of the POI. This approach is usually referred as \emph{template matching}, exploited in various problems where data is acquired in highly controlled environments with low and/or predictable variability (like quality checking on conveyor belts \cite{industry0}); the POI is simple enough to be represented by one realization or exact model (for example, in medical imaging \cite{segmentation0}); the POI changes by time and the training of pattern-specific solutions is infeasible (like in object tracking applications \cite{tracking0}).

This paper deals with the second class of problems (applications, where (dis)similarity measures invariant to certain types of distortions are needed) and presents some theoretical results related to the recently introduced dissimilarity measure Matching by Tone Mapping (MTM) \cite{mtm}, which was shown to provide superior performance in numerous template matching and even registration scenarios due to its approximate invariance to even non-linear distortions.
Before moving on to the presentation of the findings, we provide a brief overview of the most widely used measures to enable the positioning of the work in the literature of (dis)similarity measures. For the ease of discussion, we introduce the terminology of template matching: let $\mathbf{t}, \mathbf{w}\in\mathbb{R}^d$ denote a template (pattern) and a window of a signal the template is being compared to, respectively. By \emph{intensity transformation} (\emph{tone mapping}/distortion) $\mathcal{M}: \mathbb{R}\rightarrow\mathbb{R}$ we refer to a deterministic function applied to each coordinate of its parameter vector independently by introducing the notation $\mathcal{M}[\mathbf{t}]_i= \mathcal{M}(\mathbf{t}_i)$. We mention that the invariance of a (dis)similarity measure $D$ to distortions like $\mathcal{M}$ ($D(\mathbf{t}, \mathbf{w}) = D(\mathbf{t}, \mathcal{M}[\mathbf{w}])$) is usually referred as \emph{photometric invariance} although the concept is applicable in other fields of signal processing beyond imaging.

As the meaning of (dis)similarity is usually application-specific, numerous measures have been proposed in the last decades. Probably the simplest dissimilarity measures are the $L_p$ distances ($L_1$ and $L_2$ also known as \emph{Manhattan} and \emph{Euclidean} distances) with no invariance to any distortion $\mathcal{M}$. \emph{Cross-correlation} (CC) and \emph{normalized Euclidean distance} \cite{book1} are invariant to scaling, while the \emph{Pearson correlation coefficient} (PCC) is invariant to linear distortions. Although these measures are invariant to linear transformations at most, they usually serve as building blocks of advanced techniques or they are made invariant to certain classes of non-linear transformations by the kernel-trick \cite{kernel}.

Numerous (dis)similarity measures (like \emph{Spearman's Rho} \cite{spearman0} and \emph{Kendall's Tau} \cite{kendall0} ) are based on the rank transformation $R: \mathbb{R}^d\rightarrow \mathbb{Z}^d$, where $R(\mathbf{t})_i$ is the rank of $\mathbf{t}_i$ among $\lbrace \mathbf{t}_i\rbrace_{i\in\lbrace 1, \dots, d\rbrace}$, and use one of the simple measures to quantify the (dis)similarity of $R(\mathbf{t})$ and $R(\mathbf{w})$ instead of $\mathbf{t}$ and $\mathbf{w}$. Although the ranking of elements is not affected by monotonic transformations, and consequently these methods are invariant to monotonic distortions, a common drawback is their sensitivity to noise and ties among the elements of $\mathbf{t}$ and $\mathbf{w}$.


A large and popular family of (dis)similarity functions \cite{shannon0, renyi} is based on information-theoretical concepts by quantifying the \emph{mutual information} (MI) content in the intensity distributions of the template and the window. Alternatively, the comparison of the distributions of derived local quantities, like gradient orientation \cite{shannon1} was also proposed. Although the MI-based measures are considered to be invariant to even non-linear intensity transformations, the estimation of joint densities can be challenging, especially for small templates.

\emph{Correlation ratio} and its variants \cite{cratio1, cratio2} characterize the degree to which $\mathbf{w}$ can be treated as a single-valued function of $\mathbf{t}$, and were shown to provide better performance than MI in certain registration problems \cite{cratio2}.

Invariance to certain distortions can be achieved by extracting invariant features from $\mathbf{t}$ and $\mathbf{w}$ and quantifying the (dis)similarity of the feature vectors. A brief overview of photometric invariant features can be found in \cite{photometric}. Commonly used features in the imaging domain, invariant to certain types of geometric and photometric distortions are \emph{Hu's descriptors} \cite{hu0} (combinations of statistical moments) and \emph{local binary patterns} (LBP) \cite{lbp0} (based on the intensity differences of a pixel and its neighbors). 

Although geometric distortions are out of scope for this paper, we mention that some measures used in the imaging domain are invariant to even affine or projective geometrical transformations \cite{sift}.



For further details on (dis)similarity measures, excellent overviews can be found in the books \cite{book0}, \cite{book1}.

Recently, the \emph{Matching by Tone Mapping} (MTM) \cite{mtm} measure was proposed for photometric invariant template matching and registration and its restriction to monotonic distortions was also introduced \cite{mmtm}. As MTM was shown to give superior performance in numerous template matching and registration scenarios \cite{mtm} and can be computed efficiently in terms of some convolution operations, it has many potential applications in signal processing. Similarly to MI and related techniques (being approximately invariant to non-linear distortions), MTM operates by binning the template, however the proper selection of bins providing optimal performance according to some criteria is still an open question. In this paper, we carry out a statistical analysis of the effect of bin selection for MTM. 


The main contributions of the paper to the field are summarized as follows:
\begin{enumerate}
\item As the name suggests, MTM was developed for image processing, where various distortions of a template can be treated as tone mappings. We point out that MTM is a highly analogous concept to MI, with numerous potential applications beyond imaging. In order to emphasize the generality of the measure, we introduce the name \emph{normalized Unexplained Variance} (nUV) which we found more more conformant with the literature of statistics.
\item We define criteria for the ideal operation of the nUV measure, provide theoretical results on the ideal binning under these criteria and also provide algorithms to determine the ideal binning for particular problems. 
\item By numerical simulations, we show that in the context of discriminating distorted templates from noise, the proposed binning techniques improve the discrimination power of nUV by 4-13\% in terms of AUC scores, with statistical significance.
\end{enumerate}

The paper is organized as follows. In Section \ref{secmtm} a brief introduction is given to MTM, its analogy to MI is pointed out and the new nomenclature of nUV is introduced. The optimality criterion is defined, theoretical results are derived and corresponding algorithms to find an approximately optimal binnings are proposed in Section \ref{secopt}. The numerical experiments are described and evaluated in Section \ref{secres}, and finally, conclusions are drawn in Section \ref{secconc}.

\section{Brief Introduction to Matching by Tone Mapping (MTM) and Problem Formulation}
\label{secmtm}
In this section, we give a brief introduction of the MTM measure, discuss the importance of binning, formulate the problem we deal with in the rest of the paper and also point out a close relation between MTM and MI leading us to the introduction of the term $\emph{normalized Unexplained Variance}$ (nUV). 

First, the notations used in the rest of the paper are introduced, trying to follow those of the related papers \cite{mtm, mmtm} for the compatibility of discussions. We use lowercase, boldface and uppercase letters to denote scalars, vectors and matrices, respectively (e.g. $x\in\mathbb{R}$, $\mathbf{t}\in\mathbb{R}^d$, $S\in\lbrace 0, 1\rbrace^{d\times b}$), keeping the notations $\mathbf{t}$ and $\mathbf{w}$ for the template and the window and $d$ for the dimensionality of the feature space. Sets, and the special class of functions called intensity transformations (distortions or \emph{tone mappings} in \cite{mtm}) are denoted by calligraphic letters like $\mathcal{I}\in\lbrace 1, 2, \dots, d\rbrace$, and $\mathcal{M}:\mathbb{R}\rightarrow\mathbb{R}$, respectively, recalling $\mathcal{M}[\mathbf{t}]_i= \mathcal{M}(\mathbf{t}_i)$. For the ease of reading, and for compatility with literature \cite{mtm}, we also introduce greek letters, which always denote vectors in special roles. 

The MTM dissimilarity \cite{mtm} of $\mathbf{t}$ and $\mathbf{w}$ is defined as
\begin{equation}
\label{eqmtm0}
MTM_{\text{ideal}}(\mathbf{t}, \mathbf{w})= \min\limits_{\mathcal{M}: \mathbb{R}\to\mathbb{R}}\left\lbrace \frac{\Vert \mathcal{M}[\mathbf{t}] - \mathbf{w}\Vert^2}{d \text{var}(\mathbf{w})}\right\rbrace,
\end{equation}
where the numerator measures how close $\mathbf{t}$ can be transformed to $\mathbf{w}$ by applying some tone mapping  $\mathcal{M}$ coordinate-wise and the function $\text{var}(\mathbf{w})$ in the denominator stands for the empirical variance of the elements of $\mathbf{w}$, ensuring invariance to intensity scaling.
It is worth noting that MTM is not symmetric: the form (\ref{eqmtm0}) is referred as the \emph{Pattern-to-Window} (PtW) case and the \emph{Window-to-Pattern} (WtP) is defined by interchanging $\mathbf{t}$ and $\mathbf{w}$ in (\ref{eqmtm0}). In the rest of the paper we focus on the Pattern-to-Window case, but emphasize that all results can be derived for the \emph{Window-to-Pattern} (WtP) case analogously.

\subsection{Piecewise constant approximation}

The minimization problem (\ref{eqmtm0}) cannot be solved explicitly, but approximate solutions can be obtained by the linearization of the problem, particularly, replacing the term $\mathcal{M}(\mathbf{t})$ with a linear approximation. 
Let the coordinates of $\mathbf{t}$ be quantized into $b\in\mathbb{Z}^{+}$ bins and let the boundaries of the bins arranged into the vector $\mathbf{q}\in\mathbb{R}^{b+1}$, supposing that $\mathbf{q}_1 \leq \min_{i}\mathbf{t}_i$, $\max_{i}\mathbf{t}_i < \mathbf{q}_{b+1}$ and each bin $[\mathbf{q}_i, \mathbf{q}_{i+1}[$ contains at least one element. One can form the piecewise constant (PWC) \emph{slice transform} matrix $S(\mathbf{t})\in\lbrace 0, 1\rbrace^{d\times b}$ of $\mathbf{t}$ as
\begin{equation}
S_{ij}= \begin{cases} 1, & \text{if } \mathbf{q}_j \leq \mathbf{t}_i < \mathbf{q}_{j+1}, \\ 0, & \text{otherwise}.\end{cases}
\end{equation}
It can be readily seen that the matrix $S$ contains structural information about $\mathbf{t}$, each column is related to a bin, and the $i$th element of column $j$ is set to $1$ only if $\mathbf{t}_i$ falls in the bin $[\mathbf{q}_j,\mathbf{q}_{j+1}[$. The columns of the matrix $S$ are referred as \emph{slices}, the cardinalities of the slices are represented in the vector $\mathbf{n}\in\mathbb{Z}_+^b$ with $\mathbf{n}_i$ denoting the number of elements falling in slice $i$. 
Given $S$, one can approximate $\mathbf{t}$ 
as $\mathbf{t}\simeq S\mathbf{\boldsymbol\beta}$, $\mathbf{\boldsymbol\beta}\in\mathbb{R}^b$ in many ways, e.g. $\mathbf{\boldsymbol\beta}_j= (\mathbf{q}_j + \mathbf{q}_{j+1})/2$ or $\mathbf{\boldsymbol\beta}_j= \mathbf{q}_j$. 
Similarly to the approximation of $\mathbf{t}$, the matrix $S$ can be used to approximate various coordinate-wise transformations of $\mathbf{t}$, for example, the vector $\mathbf{u}= S\mathbf{\boldsymbol\beta}$, $\mathbf{\boldsymbol\beta}_j=\mathbf{q}_j^2$ can be considered as an approximation of the vector $\mathcal{M}[\mathbf{t}]\in\mathbb{R}^d$ derived from $\mathbf{t}$ by applying the tone mapping $\mathcal{M}(x)= x^2$ coordinate-wise. Analogously, for any $\mathbf{\boldsymbol\beta}\in\mathbb{R}^b$, the expression $S\mathbf{\boldsymbol\beta}$ can be considered as the PWC approximation of some possibly non-linear coordinate-wise transformation of $\mathbf{t}$. 
Obviously, the quality of approximation highly depends on the intensity distribution of $\mathbf{t}$, the number of bins, and the smoothness of $\mathcal{M}$. Nevertheless, the linearization of the minimization problem (\ref{eqmtm0}) by $\mathcal{M}(\mathbf{t})\simeq S\boldsymbol\beta$ is reasonable, and PWC MTM becomes
\begin{align}
\label{mtmlin}
D(\mathbf{t}, \mathbf{w})&= \min\limits_{\boldsymbol\beta\in\mathbb{R}^{b}}\left\lbrace \frac{\Vert S\boldsymbol\beta - \mathbf{w}\Vert^2}{d \text{var}(\mathbf{w})}\right\rbrace= \frac{\Vert S\hat{\boldsymbol\beta} - \mathbf{w}\Vert^2}{d \text{var}(\mathbf{w})},
\end{align}
where $\hat{\boldsymbol\beta}$ is the exact solution of the least squares problem in the numerator:
\begin{align}
\label{hatbeta0}
\hat{\mathbf{\boldsymbol\beta}}= \arg\min\limits_{\mathbf{\boldsymbol\beta}\in\mathbb{R}^b}\Vert S\mathbf{\boldsymbol\beta} - \mathbf{w}\Vert^2= (S^TS)^{-1}S^T\mathbf{w}.
\end{align}
\begin{figure}[t]
\begin{center}
    \includegraphics[width=0.7\textwidth]{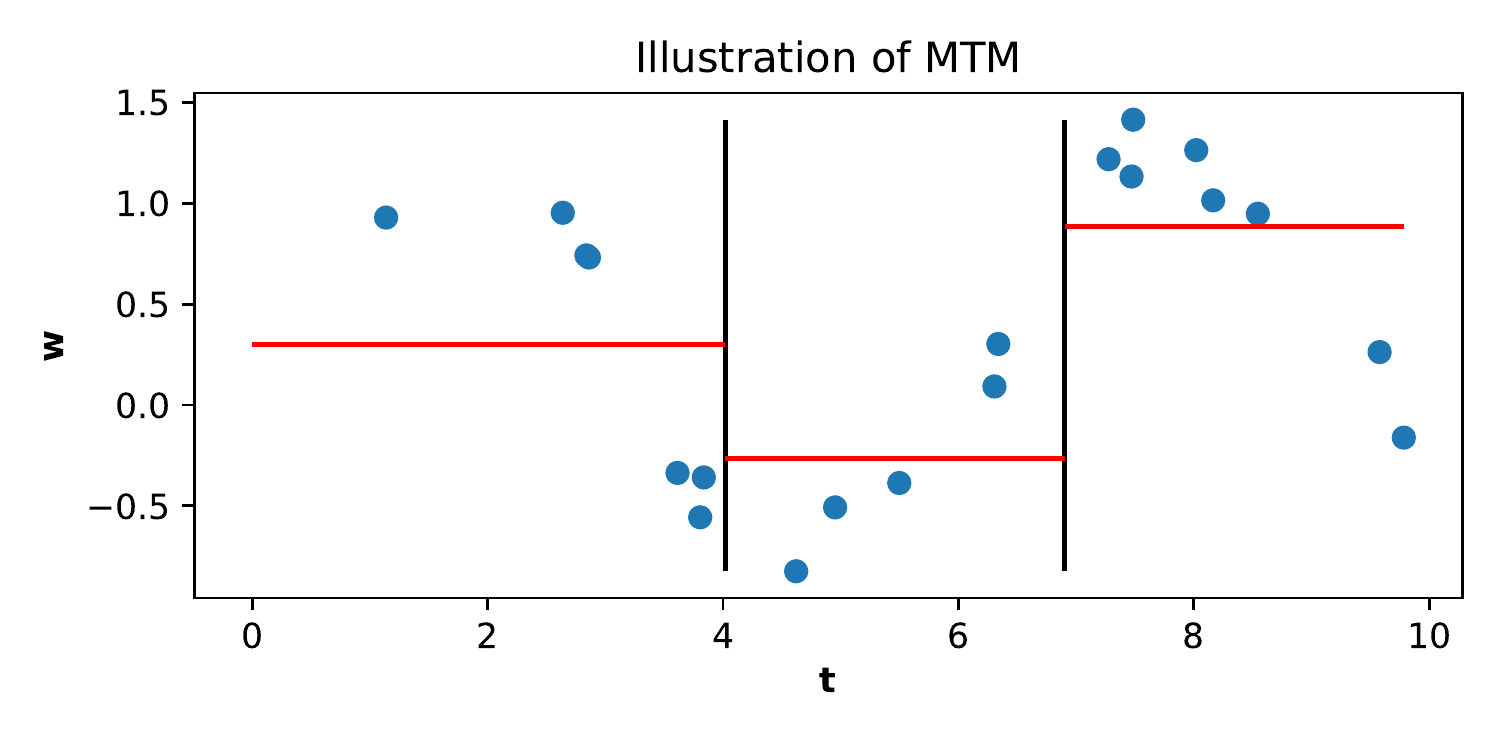}
    \end{center}
     \caption{MTM approximates the $w$ values in a particular bin by their mean (red horizontal lines) and calculates sum of squared differences of $w$ values from the corresponding means.}
     \label{figmtm}
\end{figure}
The numerator can be interpreted as a PWC ordinary least squares regression. In principle, any regression technique could be used to approximate MTM. The benefits of the PWC regression are that it has an extremely low number of parameters and can be computed efficiently.
To simplify notations, we substitute (\ref{hatbeta0}) into (\ref{mtmlin}) and introduce the formalism
\begin{equation}
\label{simplified}
D(\mathbf{t}, \mathbf{w}) = \dfrac{\Vert A\mathbf{w} - \mathbf{w}\Vert^2}{d \text{var}(\mathbf{w})},
\end{equation}
where $A=S(S^TS)^{-1}S^T$
is the projection matrix into the subspace generated by the columns of $S$. To reduce clutter, we omitted the argument $\mathbf{t}$ of $S$ and $A$, but we highlight, that both $S$ and $A$ are implied by the structure of $\mathbf{t}$.

The numerator being a least squares regression implies that $A$ is the \emph{hat-matrix}: it is idempotent, symmetric, and an orthogonal projection, thus, self-adjoint \cite{regression0}.

In principle, the PWC regression could be replaced by any regression technique. The benefits of the PWC regression to approximate MTM are that it has an extremely low number of parameters and can be computed efficiently.

An insight into the operation of the measure can be gained by recognizing some further special properties of matrix $A$ originating from its special construction from the orthogonal matrix $S$, which we utilize in Section \ref{secopt}. 

\begin{lemma} (Properties of the matrix $A$)
\label{lem-A}
Let $A$ denote a matrix $A=S(S^TS)^{-1}S^T$, and $\mathcal{I}_k\subset \lbrace 1, \dots, d\rbrace$ denote the set of indices of $\mathbf{t}$ falling in bin $k$. The matrix $A$ is a square matrix of type $\mathbb{R}^{d\times d}$ and $A_{ij}=1/\vert\mathcal{I}_k\vert$ if $i,j\in\mathcal{I}_k$, and $A_{ij}=0$ otherwise. As a consequence, $(A\mathbf{w})_i$ is the mean of elements of $\mathbf{w}$ falling in the bin where $\mathbf{t}_i$ falls.

\end{lemma}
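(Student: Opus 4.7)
The plan is to reduce everything to the very rigid structure of the slice matrix $S$: by construction, each row of $S$ has exactly one nonzero entry (equal to $1$), because every coordinate $\mathbf{t}_i$ lies in exactly one of the half-open bins $[\mathbf{q}_j,\mathbf{q}_{j+1}[$. First I would compute $S^TS$ entrywise: its $(j,k)$ entry is $\sum_i S_{ij}S_{ik}$, which counts indices lying simultaneously in bin $j$ and bin $k$. The exactly-one-bin property forces this to vanish for $j\neq k$ and to equal $|\mathcal{I}_j|=\mathbf{n}_j$ for $j=k$. Hence $S^TS=\mathrm{diag}(\mathbf{n})$, and the standing assumption that every bin is nonempty guarantees invertibility, giving $(S^TS)^{-1}=\mathrm{diag}(1/\mathbf{n})$.

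Next I would expand $A_{ij}$ using this diagonal form:
\[
A_{ij}=\sum_{k,\ell}S_{ik}(S^TS)^{-1}_{k\ell}S_{j\ell}=\sum_{k}\frac{S_{ik}S_{jk}}{\mathbf{n}_k}.
\]
Again using that each of $\mathbf{t}_i$ and $\mathbf{t}_j$ lies in only one bin, the product $S_{ik}S_{jk}$ is nonzero for at most one $k$, and equals $1$ precisely when $i$ and $j$ both fall in that same bin. Thus $A_{ij}=1/|\mathcal{I}_k|$ when $i,j\in\mathcal{I}_k$ and $A_{ij}=0$ otherwise, which is the claimed block-constant structure. The shape $d\times d$ follows immediately from the factor dimensions $d\times b$, $b\times b$, $b\times d$.

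The consequence for $A\mathbf{w}$ is then a one-line calculation: if $\mathbf{t}_i$ belongs to bin $k$, only the indices $j\in\mathcal{I}_k$ contribute to $(A\mathbf{w})_i=\sum_j A_{ij}\mathbf{w}_j$, so
\[
(A\mathbf{w})_i=\frac{1}{|\mathcal{I}_k|}\sum_{j\in\mathcal{I}_k}\mathbf{w}_j,
\]
the arithmetic mean of the coordinates of $\mathbf{w}$ indexed by the bin containing $\mathbf{t}_i$.

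I do not foresee a substantive obstacle; the argument is essentially bookkeeping driven by the partition structure. The only subtle point is to invoke cleanly both facts from the setup preceding the lemma: that the bins \emph{partition} the coordinate indices (giving the diagonality of $S^TS$ and the at-most-one-term collapse in the sum for $A_{ij}$) and that the bins are \emph{nonempty} (so that $S^TS$ is invertible and the expression $A=S(S^TS)^{-1}S^T$ is well defined). If desired, the block-constant form can also be read off more geometrically by recognising $A$ as the orthogonal projector onto $\mathrm{col}(S)$, whose columns form an orthogonal basis of indicator vectors of the $\mathcal{I}_k$, but the direct algebraic computation above is the shortest route.
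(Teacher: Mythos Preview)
Your proposal is correct and follows essentially the same route as the paper: compute $S^TS$ as the diagonal matrix of bin cardinalities (using that each row of $S$ has a single nonzero entry), invert it, read off $A_{ij}$ entrywise, and then evaluate $(A\mathbf{w})_i$ as a bin average. Your write-up is in fact more explicit than the paper's, which sketches the same steps more tersely by appealing to the ``orthogonality of $S$'' without writing out the sums.
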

\begin{proof}
For the proof see \ref{prooflem1}.
\end{proof}

The operation of MTM with PWC approximation is illustrated in Figure \ref{figmtm}. The paired samples of $\mathbf{t}$ and $\mathbf{w}$ are visualized in a scatter plot and 3 bins are indicated by vertical lines. By Lemma \ref{lem-A} and the simplified form (\ref{simplified}), MTM approximates the $\mathbf{w}$ values in a particular bin by their mean (red horizontal lines) and the numerator of the measure calculates the squared differences of $\mathbf{w}$ values from the corresponding the means. Thus, the numerator is the sum of residuals in the PWC regression, which is divided by the total empirical variance of $\mathbf{w}$, hence, $1 - D(\mathbf{t}, \mathbf{w})$ is the $r^2$ score of regressing $\mathbf{w}$ as the target variable on $\mathbf{t}$ as the explanatory variable using a piecewise constant regression. Another interpretation that will be discussed in detail in the next subsection is that MTM measures the uncertainty of $\mathbf{w}$ values falling in a particular bin by computing their variance (equal to the sum of residuals when $\mathbf{w}$ is approximated by the means within the bins). This uncertainty characterizes how much can $\mathbf{w}$ be treated as a function of $\mathbf{t}$.

To illustrate the structure of $A$ for a better insight and also validate the lemma qualitatively, let $\mathbf{t}= [2, 0, 5]^T$, $\mathbf{w}= [8, 2, 2]^T$ being binned to 2 bins by the binning vector $\mathbf{q}= [0, 3, 6]^T$. Then,
\begin{equation}
S= \left[\begin{array}{cc} 1 & 0 \\ 1 & 0 \\ 0 & 1\end{array}\right], \quad
A= S(S^TS)^{-1}S^T = \left[\begin{array}{ccc} 1/2 & 1/2 & 0 \\ 1/2 & 1/2 & 0 \\ 0 & 0 & 1\end{array}\right], \quad
A\mathbf{w} = \left[\begin{array}{c}5\\ 5\\ 2\end{array}\right], \nonumber
\end{equation}
where the first two elements $A\mathbf{w}$ are the means of the first two elements of $\mathbf{w}$ belonging to the first slice of the slicing of $\mathbf{t}$. 

We mention that the piecewise constant approximation enables the regularization of the measure through the number of bins. One can readily see that, if all values of the template $\mathbf{t}$ are unique, and each value is treated as a separate slice, both $S$ and $A$ become the identity matrix, and the numerator of PWC MTM $\Vert A\mathbf{w} - \mathbf{w}\Vert^2$ will give a perfect match ($0$ dissimilarity) for any $\mathbf{w}$. Consequently, the use of a low number of bins regularizes PWC MTM to prevent overfitting by controlling the smoothness of the non-linear tone mappings used to approximate $\mathbf{w}$ from $\mathbf{t}$.

Finally, another important property of $A$ is pointed out: its relation to k-means clustering \cite{bishop}.

\begin{lemma} (On the relation of the projection matrix $A$ to k-means clustering)
\label{lem-clustering}
Let $\mathcal{A}(\mathbf{t}, b)$ denote the set of all hat-matrices $A$ implied by slice matrices $S$ binning $\mathbf{t}$ to $b$ non-empty bins. Minimizing the expression
\begin{equation}
\hat{A} = \text{argmin}_{A \in \mathcal{A}(\mathbf{t}, b)} \Vert A\mathbf{t} - \mathbf{t}\Vert^2
\end{equation}
is equivalent to solving the k-means clustering problem for the elements of $\mathbf{t}$, and constructing the projection matrix $\hat{A}$ from the clusters interpreted as bins.
\end{lemma}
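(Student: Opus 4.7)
The plan is to translate the objective $\Vert A\mathbf{t}-\mathbf{t}\Vert^2$ into the classical within-cluster sum-of-squares functional via Lemma \ref{lem-A}, and then reconcile the two feasible sets (binnings versus arbitrary partitions) by invoking the well-known fact that in one dimension any optimal k-means partition is contiguous.

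First I would rewrite the objective. By Lemma \ref{lem-A}, for any $A\in\mathcal{A}(\mathbf{t},b)$ with induced index sets $\mathcal{I}_1,\dots,\mathcal{I}_b$, the $i$-th coordinate of $A\mathbf{t}$ equals $\bar{t}_k := \vert\mathcal{I}_k\vert^{-1}\sum_{j\in\mathcal{I}_k}\mathbf{t}_j$ whenever $i\in\mathcal{I}_k$. Consequently
\begin{equation}
\Vert A\mathbf{t}-\mathbf{t}\Vert^2 \;=\; \sum_{k=1}^{b}\sum_{i\in\mathcal{I}_k}\bigl(\mathbf{t}_i-\bar{t}_k\bigr)^{2},
\end{equation}
which is exactly the k-means objective evaluated at the partition $\{\mathcal{I}_k\}_{k=1}^{b}$ with centroids $\bar{t}_k$.

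Second I would reconcile the two feasible sets. A slice matrix $S$ arises from thresholds $\mathbf{q}_1\le\cdots\le\mathbf{q}_{b+1}$, so every admissible partition $\{\mathcal{I}_k\}$ is \emph{contiguous} with respect to the sorted order of $\mathbf{t}$, whereas k-means a priori allows arbitrary partitions. The main obstacle, and the only non-trivial step, is to argue that an optimal k-means partition in $\mathbb{R}$ may always be chosen contiguous: if two clusters $C,C'$ with centers $c<c'$ contained $x\in C$ and $x'\in C'$ with $x'<x$, swapping these points would strictly reduce $\sum_{k}\sum_{i\in C_k}(\mathbf{t}_i-\bar{t}_k)^{2}$ (a short computation using that the within-cluster sum of squares is minimized at the mean), contradicting optimality. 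Ties are handled by noting that any tied block lies in a single bin of any slicing and can likewise be assigned to a single cluster at no cost.

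Finally I would close the loop in both directions. Every $A\in\mathcal{A}(\mathbf{t},b)$ corresponds to a feasible partition into $b$ non-empty groups, so $\min_{A\in\mathcal{A}(\mathbf{t},b)}\Vert A\mathbf{t}-\mathbf{t}\Vert^2$ is at least the optimal k-means value. Conversely, given an optimal contiguous k-means partition, one constructs thresholds $\mathbf{q}_j$ lying strictly between the largest element of cluster $j-1$ and the smallest element of cluster $j$; the resulting $S$ yields a hat-matrix $\hat{A}\in\mathcal{A}(\mathbf{t},b)$ whose objective equals the optimal k-means value. Hence the two minimization problems are equivalent, and $\hat{A}$ is obtained from the k-means clusters interpreted as bins, as asserted.
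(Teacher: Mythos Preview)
Your proof is correct and shares the paper's core step: using Lemma \ref{lem-A} to rewrite $\Vert A\mathbf{t}-\mathbf{t}\Vert^2$ as the within-cluster sum of squares. The paper's own proof stops at that identification, simply declaring that the expression ``is the objective function of k-means clustering to be minimized in the space of all $b$-partitionings of the coordinates of $\mathbf{t}$'' without reconciling the two feasible sets. Your additional contiguity argument (the swap lemma showing that any optimal one-dimensional k-means partition can be taken interval-shaped, followed by the explicit construction of thresholds $\mathbf{q}$) closes a gap the paper leaves open, so your version is in fact the more complete of the two.
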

\begin{proof}
The expression $\Vert A\mathbf{t} - \mathbf{t}\Vert^2$ measures the sum of squared residuals when each element of $\mathbf{t}$ within a slice is approximated by the mean of the elements in the slice. Treating the slices as clusters and summing the squared residuals for each cluster, one can recognize $\Vert A\mathbf{t} - \mathbf{t}\Vert^2$ as the objective function of k-means clustering to be minimized in the space of all $b$-partitioning of the coordinates of $\mathbf{t}$.
\end{proof}

Finally, we mention that solving the k-means clustering problem is not equivalent to applying the well-known ML-EM k-means clustering algorithm, which provides only a suboptimal solution \cite{bishop}.

\subsection{The importance of binning and problem formulation}

Bin selection is the problem of determining the proper number and widths of bins \cite{binning} to group data, and the ideal binning strategy is usually data and application-specific.  

The authors of MTM \cite{mtm} did not address the question of bin selection for the slice transform and used equal width binning (EQW) in the evaluation of the measure with relatively low numbers of bins showing the best performance. Although there are numerous rules of thumb to determine the number of bins (square root rule \cite{smbinning}, Struges-rule \cite{smbinning}, Rice-rule \cite{smbinning}), as pointed out in the previous subsection, the number of bins plays the role of the regularization parameter, thus, we keep it as a degree of freedom.

On the other hand, given a particular number of bins, the selection of bin boundaries can naturally be expected to affect the performance of PWC MTM. The goal of this paper is to examine the effect of bin boundary selection strategies on PWC MTM to identify ideal binning techniques under specific conditions. We mention, that there are results in the literature for the selection of the widths of bins (Scott's rule \cite{scott}, Freedman-Diaconis' choice \cite{freedman}), and variable-width bins (like equal frequency binning (EQF) \cite{eqf} with each bin containing the same number of elements) have also been proposed. The common point of these techniques is that most of them are derived to optimize the construction of the empirical distribution function through histograms in terms of some optimality criteria. 
As MTM is intended to be used in pattern recognition scenarios to recognize patterns under some assumptions on the nature of the noise and possible distortions, them problem of binning is essentially different from that of constructing the empirical distribution function. To support this claim, we anticipate some results discussed in subsection \ref{low-eqf}, where EQF turns out to have extremely low performance in the pattern recognition settings of the numerical experiments.



\subsection{The relevance of MTM, its relation to MI, and the introduction of the term \emph{normalized Unexplained Variance}}
\label{sec-relation-to-mi}

As there are dozens of (dis)similarity measures proposed in the literature, we found it crucial to point out some beneficial properties of MTM that make it worth studying its statistical properties and motivated the writing of this paper. The authors of \cite{mtm} already showed that PWC MTM can be computed efficiently for a template and all windows of a signal, and also demonstrated that the performance of MTM in pattern recognition applications is highly competitive with that of MI. To further emphasize its relevance as a general-purpose dissimilarity measure, in this subsection we show that MTM is a highly analogous concept to a normalized variant of MI, leading us to change the nomenclature by introducing the term \emph{normalized Unexplained Variance} (nUV) to make the name more aligned to the classical concepts being utilized under the hood.


In statistics, entropy and variance are probably the most widely used measures of uncertainty \cite{uncertainty}. Related to entropy, mutual information (MI) treats the coordinates of the window $\mathbf{w}$ and template $\mathbf{t}$ as corresponding realizations of two random variables ($w$, $t$) with a joint distribution $(\mathbf{w}_i, \mathbf{t}_i) \sim p_{w, t}(w, t)$, and is defined as $I(t,w)= H(w) - H(w|t)$, where $H(w)$ and $H(w|t)$ denote the marginal and conditional entropies of the distribution $p_{w,t}(w,t)$. For (dis)similarity measures it is usually desired to map into a bounded range in order to be easily interpretable and comparable across different problems, and consequently, there are multiple normalized variants of MI proposed. One particular normalization leads to the \emph{uncertainty coefficient} \cite{numrec}, but for clearity, in rest of the paper we refer it as \emph{normalized mutual information} (nMI):

\begin{equation}
\label{nmi}
nMI(t,w) = \dfrac{I(t, w)}{H(w)}= 1 - \dfrac{H(w|t)}{H(w)} \simeq nMI(\mathbf{t},\mathbf{w}),
\end{equation}
where $nMI(\mathbf{t}, \mathbf{w})$ refers to empirical estimation using the paired samples of $\mathbf{t}$ and $\mathbf{w}$.
nMI quantifies the relative amount of information $w$ and $t$ share, by subtracting the relative amount of uncertainty remaining in $w$ given $t$ from the total amount of uncertainty in $w$, which is scaled to $1$. 

According to the law of total variance \cite{totalvar}, if $w$ has finite variance, 
\begin{equation}
\mathbb{D}^2(w) = \mathbb{E}[\mathbb{D}^2(w|t)] + \mathbb{D}^2[\mathbb{E}(w|t)]
\end{equation}
holds. The first term on the right-hand side is usually referred as the \emph{unexplained variance}, characterizing the variance remaining in $w$ given $t$; and the second term is called the \emph{explained variance} characterizing the variation of $w$ one can explain by the variation of $t$.
Rearranging the equation and dividing both sides by $\mathbb{D}^2(w)$, one obtains
\begin{equation}
\label{aaa}
 \frac{\mathbb{D}^2[\mathbb{E}(w|t)]}{\mathbb{D}^2(w)} = 1-\frac{\mathbb{E}[\mathbb{D}^2(w|t)]}{\mathbb{D}^2(w)}.
\end{equation}
Given a paired sample for $w$ and $t$, the various terms can be approximated by empirical quantities. In the theory of regression, the conditional expectation function $\mathbb{E}(w|t)$ is proved \cite{bishop} to be the best predictor of $w$ from $t$ in terms of squared loss, and
any least squares regression function estimating $w$ from $t$ can be treated as an approximation of $\mathbb{E}(w|t)$. Consequently, $\mathbb{E}[\mathbb{D}^2(w|t)]$ can be estimated by the squared residuals of the regression. Choosing a piecewise constant regression function and using the notations introduced before,
\begin{equation}
    \mathbb{E}[\mathbb{D}^2(w|t)] \simeq \frac{\Vert A\mathbf{w} - \mathbf{w}\Vert^2}{d},
\end{equation}
with $d$ denoting the dimensionality of the feature space, and estimating $\mathbb{D}^2(w)$ by the empirical variance $\text{var}(\mathbf{w})$:
\begin{equation}
\label{bbb}
\frac{\mathbb{D}^2[\mathbb{E}(w|t)]}{\mathbb{D}^2(w)} = 1 - \frac{\mathbb{E}[\mathbb{D}^2(w|t)]}{\mathbb{D}^2(w)} \simeq 1 - D(\mathbf{t}, \mathbf{w}).
\end{equation}
The term on the left hand side is the normalized explained variance, quantifying the variance of $\mathbf{w}$ explained by $\mathbf{t}$. 
Comparing the expressions (\ref{nmi}) and (\ref{bbb}), one can observe that nMI and $1 - D(\mathbf{t}, \mathbf{w})$ are highly analogous concepts, both quantifying the uncertainty \emph{disappearing} from $\mathbf{w}$ given $\mathbf{t}$, nMI using entropy, and $1 - D(\mathbf{t}, \mathbf{w})$ using variance to measure uncertainty. 


This analogy shows that MTM is more general in principles than what the name \emph{Matching by Tone Mapping} would suggest, it can be treated as a meaningful alternative of MI in any applications where MI is used as a similarity measure. To emphasize its generality beyond image processing, be conformant with the literature of statistics, and the classical principles utilized by its operation, we found it necessary to change the nomenclature and this change is formalized in the next definition.

\begin{definition}
In the context of this paper, the piecewise constant approximation of the \emph{normalized unexplained variance} (nUV) dissimilarity measure refers to the piecewise constant approximation of the MTM measure, both denoted by $D(\mathbf{t}, \mathbf{w})$.
\end{definition}

As it is a common technique to estimate nMI through binning \cite{mi-estimation}, we compare the role of binning in nMI to that in PWC nUV. Unlike the binning implementations of nMI (discretizing both vectors), PWC nUV carries out binning only for the template vector. Consequently, another beneficial property of PWC nUV is that one can expect less loss of information and less sensitivity to improper binning strategies.


Finally, we mention that although MI and its variants are widely used in pattern recognition, according to our best knowledge, there are no results in the literature to optimize its operation by determining the ideal binning technique in terms of some pattern recognition specific optimality conditions, which could make PWC nUV a favorable choice to nMI in problems where the assumptions of optimality are met.


\section{The optimal binning technique}
\label{secopt}
In this section, we carry out the statistical analysis of bin boundary selection strategies for the PWC nUV measure. 
First, we introduce a statistical model and phrase the optimality criterion we aim to optimize by choosing the binning technique. Then, the main results on the optimality of binning are derived and two algorithms are proposed to determine the ideal binning for a particular template $\mathbf{t}$ under mild assumptions on the nature of distortions and noise. 

\subsection{The optimal operation of the PWC nUV dissimilarity measure}

Naturally, any binning technique (equal width, equal frequency, etc.) can be used to carry out the slice transform, thus, to drive the PWC nUV measure. In order to select the \emph{optimal} binning technique for a given number of bins, we need to define when we consider the operation of the measure \emph{optimal}. As the goal of pattern recognition is to recognize patterns under certain classes of distortions, we consider PWC nUV operating optimally when it separates the noisy background from a noisy and distorted pattern as much as possible. In order to put this concept formally -- in accordance with the notations so far -- let $\mathbf{t}$ denote a template, $\mathbf{\boldsymbol \xi}\sim \prod\limits_{i=1}^{d}f(\mathbf{\boldsymbol \xi}_i)$ a window containing white noise (from distribution $f$ with finite variance $\sigma^2$), and $\mathbf{w}= \mathcal{M}[\mathbf{t}] + \mathbf{\boldsymbol \zeta}$, $\mathbf{\boldsymbol \zeta} \sim \prod\limits_{i=1}^{d}f(\mathbf{\zeta}_i)$ the window containing the template distorted by the tone mapping $\mathcal{M}$ and additive noise $\mathbf{\boldsymbol \zeta}$. We consider the tone mapping to be a stochastic process (a random real function) $\lbrace \mathcal{M}(x) \rbrace_{x \in\mathbb{R}}$ with finite first and second moments ($\mathbb{E}[\mathcal{M}(x)] < \infty$, $\mathbb{D}^2[\mathcal{M}(x)] < \infty$, $x\in\mathbb{R}$) defined by all finite dimensional probability distributions $g_{x_1, x_2, \dots, x_n}^n(x_1, x_2, \dots, x_n)$, $n <\infty$.
\begin{definition} (Optimal binning)
\label{def-optimality}
With the notations introduced before, for a given template $\mathbf{t}$ we consider the operation of PWC nUV optimal if its expected discrimination power 
\begin{equation}
\label{criterion_0}
\mathbb{E}_\xi \mathbb{E}_{\zeta}\mathbb{E}_{\mathcal{M}} [ D(\mathbf{t}, \mathbf{\xi}) -  D(\mathbf{t}, \mathbf{w})] = \mathbb{E}_\xi  D(\mathbf{t}, \mathbf{\xi}) -  \mathbb{E}_{\zeta}\mathbb{E}_{\mathcal{M}} D(\mathbf{t}, \mathbf{w})
\end{equation}
regarding a noisy window and a noisy distorted template is maximal. 
\end{definition}
Put in another way, we consider PWC nUV operating optimally if the binning technique used in the slice transform is such that the expected dissimilarity of the template from noise ($D(\mathbf{t}, \mathbf{\xi})$) and the expected dissimilarity of the template from the distorted template ($D(\mathbf{t}, \mathcal{M}[\mathbf{t}] + \mathbf{\zeta})$) is as different as possible.


\subsection{Linearization of the distortion and the models being examined}\label{section32}

In this subsection, we apply equivalent transformations to replace the stochastic process $\mathcal{M}$ by a random variable from a finite-dimensional distribution.

If all the coordinates of $\mathbf{t}$ are different, $\mathcal{M}[\mathbf{t}]$ is a random quantity governed by the $d$-dimensional distribution $g^d_{\mathbf{t}_1, \mathbf{t}_2, \dots, \mathbf{t}_d}$ of the distortion $\lbrace \mathcal{M}(x)\rbrace_{x\in\mathbb{R}}$. What makes $\mathcal{M}[\mathbf{t}]$ different from a real random vector variable is the presence of equal coordinates. If $\mathbf{t}_i = \mathbf{t}_j$ holds for some $i\neq j$, $\mathcal{M}[\mathbf{t}]_i = \mathcal{M}[\mathbf{t}]_j$ also needs to hold, as one realization of the random tone mapping is a function assigning the same domain value $\mathcal{M}(\mathbf{t}_i) = \mathcal{M}(\mathbf{t}_j)$ to both $\mathbf{t}_i$ and $\mathbf{t}_j$. In order to ensure that this condition holds, we factor $\mathbf{t}$ by introducing its full-rank slice transform. Let $d_{\boldsymbol\tau}$ denote the number of unique coordinates of $\mathbf{t}$, $\boldsymbol\tau\in\mathbb{R}^{d_\tau}$ denote the vector of unique elements of $\mathbf{t}$ in an increasing order, and let $S_{\boldsymbol\tau}\in\lbrace 0, 1\rbrace^{d\times n_{\boldsymbol\tau}}$ denote the \emph{full-rank} slice transform matrix, in which each unique coordinate of $\mathbf{t}$ falls in a distinct slice and let $\mathbf{n}_{\boldsymbol\tau} \in\mathbb{Z}_+^{d_{\boldsymbol\tau}}$ denote the vector of the number of unique elements in the slices of $S_{\boldsymbol\tau}$. With $S_{\boldsymbol\tau}$, $\mathbf{t}$ can be reconstructed without any loss of information, that is $\mathbf{t}= S_{\boldsymbol\tau} \mathbf{\boldsymbol\tau}$. 

With these notations, $\mathcal{M}[\mathbf{t}] = S_{\boldsymbol\tau}\mathcal{M}[\mathbf{\boldsymbol\tau}]$, where $\mathcal{M}[\mathbf{\boldsymbol\tau}]$ is a random vector from the $d_\tau$ dimensional distribution $g^{d_{\boldsymbol\tau}}_{{\boldsymbol\tau}_1, {\boldsymbol\tau}_2, \dots, {\boldsymbol\tau}_{d_{\boldsymbol\tau}}}$. Consequently, $\mathbf{w}= S_{\boldsymbol\tau}\mathbf{m} + \mathbf{\boldsymbol\zeta}$, where $\mathbf{m} = \mathcal{M}[\mathbf{\boldsymbol\tau}] \sim g^{d_{\boldsymbol\tau}}_{{\boldsymbol\tau}_1, {\boldsymbol\tau}_2, \dots, {\boldsymbol\tau}_{d_{\boldsymbol\tau}}}$ and the optimality criterion becomes the maximization of 
\begin{equation}
\label{optimality-criterion}
\mathbb{E}_{\boldsymbol\xi}D(\mathbf{t}, \boldsymbol\xi) - \mathbb{E}_{\boldsymbol\zeta}\mathbb{E}_\mathbf{m} D(\mathbf{t}, S_{\boldsymbol\tau}\mathbf{m} + \boldsymbol\zeta).
\end{equation}

\subsection{The need for first order approximation of the expected values}
\label{ratio-dist}
Treating the $D(\mathbf{t}, \mathbf{w})$ as a random quantity through the random nature of the window implies some difficulties in the evaluation of the optimality-criterion (\ref{optimality-criterion}), as random variables appear in both the numerator and denominator, leading to a ratio distribution which is analytically interactable. In order to carry out the analysis, we introduce the usual first-order approximation for the expectated value of the ratio distribution \cite{ratio-distribution}:
\begin{equation}
\mathbb{E}_\eta\dfrac{f(\eta)}{g(\eta)} \simeq \dfrac{\mathbb{E}_\eta f(\eta)}{\mathbb{E}_\eta g(\eta)},
\end{equation}
and use this approximation throughout the paper when evaluating the expected value of the PWC nUV measure. 



\subsection{Statistical analysis of the model}


In this section we carry out the statistical analysis of the model introduced.

\begin{proposition}  
\label{st-noise}
Using the notations introduced before, in the first order approximation of the ratio distribution,
\begin{equation}
\mathbb{E}_{\boldsymbol\xi} D(\mathbf{t}, \boldsymbol\xi)\simeq\dfrac{d-b}{d-1}.
\end{equation}
\end{proposition}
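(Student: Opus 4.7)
The plan is to use the linearity of expectation together with the projection structure of $A$ (Lemma~\ref{lem-A}) and the ratio-distribution first-order approximation introduced in Section~\ref{ratio-dist}. First I would rewrite the squared norm as $\Vert(I-A)\boldsymbol\xi\Vert^2 = \boldsymbol\xi^T(I-A)\boldsymbol\xi$, exploiting the fact that $I-A$ is a symmetric, idempotent orthogonal projection of rank $d-b$. The first-order approximation then reduces the claim to computing $\mathbb{E}_{\boldsymbol\xi}[\boldsymbol\xi^T(I-A)\boldsymbol\xi]$ in the numerator and $\mathbb{E}_{\boldsymbol\xi}[d\,\text{var}(\boldsymbol\xi)]$ in the denominator independently.

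For the numerator, I would invoke the standard quadratic form identity $\mathbb{E}[\boldsymbol\xi^T M \boldsymbol\xi] = \text{tr}(M\,\text{Cov}(\boldsymbol\xi)) + \mu_{\boldsymbol\xi}^T M \mu_{\boldsymbol\xi}$ with $\text{Cov}(\boldsymbol\xi)=\sigma^2 I$ and $\mu_{\boldsymbol\xi}=\mathbb{E}[\xi_1]\cdot\mathbf{1}$, as the coordinates of $\boldsymbol\xi$ are i.i.d. The crucial observation is that $A\mathbf{1} = \mathbf{1}$, which follows directly from Lemma~\ref{lem-A}: within the block of indices belonging to a given slice $\mathcal{I}_k$, the row sums of $A$ equal $|\mathcal{I}_k|\cdot 1/|\mathcal{I}_k| = 1$. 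Consequently $(I-A)\mathbf{1}=\mathbf{0}$, the mean contribution vanishes regardless of the value of $\mathbb{E}[\xi_1]$, and the trace term simplifies to $\sigma^2\,\text{tr}(I-A) = \sigma^2(d-b)$, since $\text{tr}(A)=b$.

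For the denominator, using that $d\,\text{var}(\boldsymbol\xi) = \sum_{i=1}^d(\xi_i - \bar\xi)^2$ and that $\mathbb{E}\bigl[\sum_{i=1}^d(\xi_i-\bar\xi)^2\bigr] = (d-1)\sigma^2$ for i.i.d.\ samples yields $\mathbb{E}_{\boldsymbol\xi}[d\,\text{var}(\boldsymbol\xi)] = (d-1)\sigma^2$. Substituting into the approximation, the $\sigma^2$ factors cancel and give
$$\mathbb{E}_{\boldsymbol\xi}D(\mathbf{t},\boldsymbol\xi) \simeq \frac{\sigma^2(d-b)}{(d-1)\sigma^2} = \frac{d-b}{d-1},$$
as claimed. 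I do not anticipate a substantive obstacle; the only care required is in verifying $A\mathbf{1}=\mathbf{1}$ from the block structure of Lemma~\ref{lem-A}, which is what allows the mean term to drop and therefore permits the analysis to proceed without any assumption on the noise mean.
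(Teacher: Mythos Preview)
Your proof is correct and follows the same overall structure as the paper's: apply the first-order ratio approximation, then evaluate numerator and denominator separately. The paper expands $\Vert A\boldsymbol\xi-\boldsymbol\xi\Vert^2$ into the three inner products $\langle A\boldsymbol\xi,A\boldsymbol\xi\rangle$, $\langle A\boldsymbol\xi,\boldsymbol\xi\rangle$, $\langle\boldsymbol\xi,\boldsymbol\xi\rangle$ and computes each one by summing over indices via Lemma~\ref{lem-A}, explicitly invoking the zero-mean assumption on the noise. You instead compress the numerator into the quadratic form $\boldsymbol\xi^T(I-A)\boldsymbol\xi$ and appeal to the trace identity, which is cleaner; your additional observation that $A\mathbf{1}=\mathbf{1}$ (hence $(I-A)\mathbf{1}=\mathbf{0}$) is a genuine, if minor, strengthening, since it shows the result holds for i.i.d.\ noise of arbitrary mean rather than only for the zero-mean case the paper's argument uses.
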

\begin{proof}
For the proof see \ref{proof-noise}.
\end{proof}

As a consequence of the proposition, the expected value of the dissimilarity of a template $\mathbf{t}$ from a window containing only noise is a constant, which depends only on the dimensionality of the space $d$ and the number of bins $b$, but is independent of the structure of the bins (matrix $A$). This result suggests that the matrix $A$ minimizing the term $\mathbb{E}_\mathbf{m}\mathbb{E}_{\boldsymbol\zeta} D(\mathbf{t}, S_\tau\mathbf{m} + \boldsymbol\zeta)$ will maximize the expected discrimination power of the measure according to the the optimality criterion (\ref{optimality-criterion}).

\begin{proposition} (On the expected dissimilarity of a template and a distorted, noisy template)
\label{st-distorted}
With the notations introduced before,
\begin{equation}
\label{the-solution}
\mathbb{E}_\zeta\mathbb{E}_\mathbf{m} D(\mathbf{t}, S_{\boldsymbol\tau}\mathbf{m} + \boldsymbol\zeta) \simeq \\ 
\dfrac
{
\langle \mathbf{n}_{\boldsymbol\tau}, \mathbb{E}_\mathbf{m}\mathbf{m}^2\rangle - \langle A, S_{\boldsymbol\tau} \text{Cross}(\mathbf{m}) S_{\boldsymbol\tau}^T\rangle_{F} + \sigma^2(d - b)
}
{\langle \mathbf{n}_{\boldsymbol\tau}, \mathbb{E}_\mathbf{m}\mathbf{m}^2\rangle - \dfrac{1}{d}\mathbf{n}_{\boldsymbol\tau} \text{Cross}(\mathbf{m}) \mathbf{n}_{\boldsymbol\tau}^T + \sigma^2\left(d - 1\right)},
\end{equation}
where $\text{Cross}(\mathbf{m})_{ij}= \mathbb{E}_\mathbf{m}\mathbf{m}_i\mathbf{m}_j$ denotes the expected cross-product matrix of the distortion $\mathbf{m}$, $\langle, \rangle_F$ denotes the Frobenius inner product, and the vector $\mathbf{n}_{\boldsymbol\tau}$ contains the cardinalities of the slices in the full-rank slice transform $S_{\boldsymbol\tau}$ of $\mathbf{t}$.
\end{proposition}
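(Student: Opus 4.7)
My plan is to evaluate the numerator and denominator of $D(\mathbf{t}, \mathbf{w})$ separately under the first-order ratio approximation, substituting $\mathbf{w} = S_{\boldsymbol\tau}\mathbf{m} + \boldsymbol\zeta$, and then to simplify using the projection structure of $A$ and properties of the centering matrix. Throughout, I will use $\mathbb{E}\boldsymbol\zeta = 0$, $\mathbb{E}\boldsymbol\zeta\boldsymbol\zeta^T = \sigma^2 I$, and independence of $\mathbf{m}$ and $\boldsymbol\zeta$.

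For the numerator, I would first write $\|A\mathbf{w} - \mathbf{w}\|^2 = \mathbf{w}^T(I-A)\mathbf{w}$ exploiting that $I-A$ is an orthogonal projection (hence idempotent and self-adjoint by Lemma~1 and the discussion around~(\ref{simplified})). Expanding $\mathbf{w} = S_{\boldsymbol\tau}\mathbf{m} + \boldsymbol\zeta$ gives three terms; the cross term $2\boldsymbol\zeta^T(I-A)S_{\boldsymbol\tau}\mathbf{m}$ vanishes in expectation because $\boldsymbol\zeta$ is zero-mean and independent of $\mathbf{m}$. The noise term contributes $\sigma^2\,\mathrm{tr}(I-A) = \sigma^2(d-b)$ since $A$ has rank $b$. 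For the signal term, I would apply the identity $\mathbb{E}[\mathbf{m}^T M \mathbf{m}] = \mathrm{tr}(M\,\mathrm{Cross}(\mathbf{m}))$ with $M = S_{\boldsymbol\tau}^T(I-A)S_{\boldsymbol\tau}$, cycle the trace to obtain $\mathrm{tr}\bigl((I-A)S_{\boldsymbol\tau}\mathrm{Cross}(\mathbf{m})S_{\boldsymbol\tau}^T\bigr)$, and split it into $\mathrm{tr}(S_{\boldsymbol\tau}\mathrm{Cross}(\mathbf{m})S_{\boldsymbol\tau}^T) - \langle A, S_{\boldsymbol\tau}\mathrm{Cross}(\mathbf{m})S_{\boldsymbol\tau}^T\rangle_F$. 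The key observation is that, since each row of $S_{\boldsymbol\tau}$ is an indicator, $\mathrm{tr}(S_{\boldsymbol\tau}\mathrm{Cross}(\mathbf{m})S_{\boldsymbol\tau}^T) = \sum_j (\mathbf{n}_{\boldsymbol\tau})_j\,\mathbb{E}\mathbf{m}_j^2 = \langle \mathbf{n}_{\boldsymbol\tau}, \mathbb{E}_\mathbf{m}\mathbf{m}^2\rangle$, which produces the first term of the stated numerator.

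For the denominator, I would write $d\cdot\mathrm{var}(\mathbf{w}) = \mathbf{w}^T P \mathbf{w}$ with the centering matrix $P = I - \tfrac{1}{d}\mathbf{1}\mathbf{1}^T$, which is again an orthogonal projection with $\mathrm{tr}(P) = d-1$. The same expansion strategy applies: the cross term vanishes, the noise contributes $\sigma^2(d-1)$, and the signal contributes $\mathrm{tr}(PS_{\boldsymbol\tau}\mathrm{Cross}(\mathbf{m})S_{\boldsymbol\tau}^T)$. Splitting this gives $\langle \mathbf{n}_{\boldsymbol\tau}, \mathbb{E}_\mathbf{m}\mathbf{m}^2\rangle - \tfrac{1}{d}\mathbf{1}^T S_{\boldsymbol\tau}\mathrm{Cross}(\mathbf{m})S_{\boldsymbol\tau}^T\mathbf{1}$. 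The identity $\mathbf{1}^T S_{\boldsymbol\tau} = \mathbf{n}_{\boldsymbol\tau}^T$ (column sums of the slice matrix are the slice cardinalities) converts the second piece to $\tfrac{1}{d}\mathbf{n}_{\boldsymbol\tau}\mathrm{Cross}(\mathbf{m})\mathbf{n}_{\boldsymbol\tau}^T$, yielding the stated denominator. Dividing, invoking the first-order ratio approximation from Section \ref{ratio-dist}, gives (\ref{the-solution}).

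I do not expect any step to be genuinely hard once one commits to the quadratic-form / trace formalism; the trickiest bookkeeping will be keeping track of the two different projection-like matrices ($I-A$ in the numerator and the centering matrix $P$ in the denominator) and carefully collapsing traces against the sparse structure of $S_{\boldsymbol\tau}$, especially the two distinct ways that $\mathbf{n}_{\boldsymbol\tau}$ enters — as a diagonal-contraction (for $\langle \mathbf{n}_{\boldsymbol\tau}, \mathbb{E}_\mathbf{m}\mathbf{m}^2\rangle$) versus as a full quadratic form via $\mathbf{1}^T S_{\boldsymbol\tau}$.
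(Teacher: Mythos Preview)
Your proposal is correct and follows essentially the same route as the paper: separately evaluate numerator and denominator under the first-order ratio approximation, exploit that $I-A$ (and, implicitly in the paper, the centering operator) is an orthogonal projection to collapse the quadratic form, kill the cross term by $\mathbb{E}\boldsymbol\zeta=0$, extract $\sigma^2(d-b)$ and $\sigma^2(d-1)$ from the noise, and rewrite the signal quadratic forms via $\text{Cross}(\mathbf{m})$ and the column sums $\mathbf{1}^TS_{\boldsymbol\tau}=\mathbf{n}_{\boldsymbol\tau}^T$. Your trace/centering-matrix packaging is slightly more streamlined than the paper's inner-product expansion, but the ideas are identical.
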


\begin{proof}
For the proof see \ref{proof-distorted}.
\end{proof}

We note that the cross-product matrix ($Cross(\mathbf{m})$) encodes the covariance structure and the mutual relationships of the coordinates of the mean vector as
$
Cross(\mathbf{m})= Cov(\mathbf{m}) + (\mathbb{E}_{\mathbf{m}}\mathbf{m})(\mathbb{E}_{\mathbf{m}}\mathbf{m})^T
$
where $Cov(\mathbf{m})_{ij}=\mathbb{E}_{\mathbf{m}}[(\mathbf{m}_i - \mathbb{E}_\mathbf{m}\mathbf{m}_i)(\mathbf{m}_j - \mathbb{E}_\mathbf{m}\mathbf{m}_j)]$ is the exact covariance matrix of the distortion.

\begin{theorem} (On the optimization of binning)
\label{the-theorem}
In first order approximation of the expected value of the ratio distribution, the projection matrix maximizing the expression
\begin{equation}
\label{maximization}
\hat{A} = \text{argmax}_{A \in \mathcal{A}(\mathbf{t}, b)} \langle A, S_{\boldsymbol\tau}\text{Cross}(\mathbf{m})S_{\boldsymbol\tau}^T\rangle_F 
\end{equation}
minimizes the expected dissimilarity of the template and the distorted, noisy template (equation (\ref{the-solution})), thus, maximizes the separation power of the measure for the distortion $\mathbf{m}$. 
\end{theorem}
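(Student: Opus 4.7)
The plan is to exploit the fact that the expression on the right-hand side of Proposition~\ref{st-distorted} is a ratio in which only the numerator depends on the projection matrix $A$ (with the number of bins $b$ fixed). Concretely, I would proceed in three short steps.

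First, I would observe that the denominator of (\ref{the-solution}) is a function only of the full-rank slice cardinalities $\mathbf{n}_{\boldsymbol\tau}$, of the cross-product matrix $\text{Cross}(\mathbf{m})$, of $d$ and of $\sigma^2$. Since $\mathbf{n}_{\boldsymbol\tau}$ is determined by the template $\mathbf{t}$ itself through its \emph{full-rank} slice transform $S_{\boldsymbol\tau}$ (not by the $b$-bin slice transform that defines $A$), the denominator is a constant over the feasible set $\mathcal{A}(\mathbf{t},b)$. I would briefly check that under the model assumptions this constant is positive: it is an empirical approximation of $d\,\mathbb{D}^2(w)$, which is strictly positive whenever the distorted template plus noise is not degenerate (in practice guaranteed by $\sigma^2>0$).

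Second, I would inspect the numerator term by term. The quantities $\langle \mathbf{n}_{\boldsymbol\tau}, \mathbb{E}_\mathbf{m}\mathbf{m}^2\rangle$ and $\sigma^2(d-b)$ are again independent of the choice of $A$ within $\mathcal{A}(\mathbf{t},b)$, so the only $A$-dependent contribution to the numerator is $-\langle A, S_{\boldsymbol\tau}\text{Cross}(\mathbf{m})S_{\boldsymbol\tau}^T\rangle_F$. Consequently, minimizing $\mathbb{E}_{\boldsymbol\zeta}\mathbb{E}_\mathbf{m} D(\mathbf{t}, S_{\boldsymbol\tau}\mathbf{m}+\boldsymbol\zeta)$ over $A\in\mathcal{A}(\mathbf{t},b)$ is equivalent, up to an additive constant and a positive multiplicative constant, to maximizing $\langle A, S_{\boldsymbol\tau}\text{Cross}(\mathbf{m})S_{\boldsymbol\tau}^T\rangle_F$, yielding the stated optimization problem (\ref{maximization}).

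Third, I would close the loop with the optimality criterion (\ref{optimality-criterion}). By Proposition~\ref{st-noise}, $\mathbb{E}_{\boldsymbol\xi}D(\mathbf{t},\boldsymbol\xi) \simeq (d-b)/(d-1)$ does not depend on $A$ either, so the separation power (\ref{optimality-criterion}) equals a constant minus $\mathbb{E}_{\boldsymbol\zeta}\mathbb{E}_\mathbf{m} D(\mathbf{t}, S_{\boldsymbol\tau}\mathbf{m}+\boldsymbol\zeta)$. Hence the $\hat A$ defined by (\ref{maximization}) maximizes the separation power, which is precisely the claim.

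The only subtle point, and the one I would spend the most care on, is justifying that the $A$-dependence is confined to the single Frobenius inner product: one must carefully distinguish the two slice transforms in play (the full-rank $S_{\boldsymbol\tau}$ induced by $\mathbf{t}$ and the $b$-bin $S$ defining $A$), and verify that the terms $\langle \mathbf{n}_{\boldsymbol\tau}, \mathbb{E}_\mathbf{m}\mathbf{m}^2\rangle$ and $\tfrac{1}{d}\mathbf{n}_{\boldsymbol\tau}\text{Cross}(\mathbf{m})\mathbf{n}_{\boldsymbol\tau}^T$ involve only the former. Once that separation is made explicit, the proof reduces to the elementary monotonicity argument sketched above.
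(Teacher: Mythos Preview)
Your proposal is correct and follows exactly the same approach as the paper: observe that $A$ appears only in the numerator of (\ref{the-solution}) with a negative sign, so maximizing the Frobenius inner product minimizes the expected dissimilarity and hence, since $\mathbb{E}_{\boldsymbol\xi}D(\mathbf{t},\boldsymbol\xi)$ is $A$-independent by Proposition~\ref{st-noise}, maximizes the separation power. The paper's proof is essentially a one-sentence version of your three steps; your additional care in distinguishing $S_{\boldsymbol\tau}$ from the $b$-bin $S$ and in noting the positivity of the denominator only makes the argument more complete.
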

\begin{proof}
The statement can be readily seen as the projection matrix $A$ appears only in the numerator of (\ref{the-solution}) and has a negative sign, thus maximizing (\ref{maximization}) minimizes the approximation in equation (\ref{the-solution}) and maximizes the optimality criterion in equation (\ref{optimality-criterion}).
\end{proof}

The results give an interesting insight into the operation of the PWC nUV measure. In order to optimize its operation in terms of the optimality criterion (\ref{optimality-criterion}), one needs to find a bin structure implying a projection matrix $A$, which maximizes the alignment of the projection matrix and the cross-product matrix of the distortion by maximizing their Frobenius (matrix) inner product. Put in another way, as $A_{ij}$ is zero if $\mathbf{t}_i$ and $\mathbf{t}_j$ do not fall in the same bin, the maximization of the Frobenius-product requires strongly covarying distortion coordinates having similar means to fall in the same bin to contribute their high cross-product value to the objective function of the maximization (\ref{maximization}).

One can readily see that the optimization problem (\ref{maximization}) is a combinatorial optimization problem as the matrices in the set $\mathcal{A}(\mathbf{t}, b)$ are induced by the $b$-partitions of the set $\left\lbrace 1, \dots, d \right\rbrace$, thus, the problem is hardly tractable analytically. However, greedy algorithms \cite{greedy} can be derived to approximate the optimal solutions. Given a cross-product matrix $Cross(\mathbf{m})$, the number of bins $b$ and the vector $\mathbf{n}_{\boldsymbol\tau}$ containing the cardinalities of the slices of the full-rank slice transform $S_{\boldsymbol\tau}$, a greedy algorithm to find a binning approximating the ideal one is provided in Algorithm \ref{alg1}. The algorithm initializes a random configuration of $b$ non-empty bins and computes the inner product $\langle A, S_{\boldsymbol\tau} Cross(\mathbf{m})S_{\boldsymbol\tau}^T\rangle_F$ with the matrix $A$ implied by the random configuration. Then, iteratively checks if moving any of the bin boundaries one step to the left or right increases the inner product. In each iteration, the adjustment of bin boundaries leading to the highest increase in the inner product is being chosen. The algorithm stops when no further increase can be achieved. The vector of bin boundaries $\mathbf{q}$ computed by the algorithm contains the indices of the bin boundaries of the ideal binning in the vector $\boldsymbol\tau$ containing the ordered, unique elements of $\mathbf{t}$.

{
\captionof{algorithm}{Greedy optimization of binning. Accessing an item in a vector is denoted by squared brackets, subscripts and superscripts are parts of the names of the variables. A Python implementation of the algorithm is available in the GitHub repository \url{https://github.com/gykovacs/ideal_binning_nuv}}\label{alg1}
\begin{footnotesize}

//\Call{RowColSum}{} computes the contribution of row and column $i$ to the inner product for bin $j$
\begin{algorithmic}[1]
\Function{RowColSum}{$i$, $j$, $q$, $Cross$, $n_\tau$}
\State $s\gets Cross[i, i]*n_\tau[i]*n_\tau[i]$
\For{$k \gets q[j]$ to $q[j+1]$} // the upper bound exclusive
    \If{$k \neq i$}
        \State $s \leftarrow s + 2*Cross[i, k]*n_\tau[i]*n_\tau[k]$
    \EndIf
\EndFor
\State\Return s;
\EndFunction
\end{algorithmic}

//\Call{Change}{} computes the change in the objective function when the $i$th bin boundary is moved one step to the left ($\delta = -1$) or to the right ($\delta = 1$)
\begin{algorithmic}[1]
\Function{Change}{$i,\delta, s, q, n, Cross, n_\tau$}
\State $\theta \leftarrow (\delta-1)/2$ // transforming the step to 0 or -1
\State $s'\leftarrow s_i - \delta*$\Call{RowColSum}{$q[i] + \theta, i, q, Cross, n_\tau$} // new contribution of bin $i$
\State $s_{-1}'\leftarrow s_{i-1} + \delta*$\Call{RowColSum}{$q[i] + \theta, i-1, q, Cov, n_\tau$} // new contribution of bin $i-1$
\State $n', n_{-1}' \leftarrow n[i] - \delta*n_\tau[q[i] + \theta], n[i-1] + \delta*n_\tau[q[i] + \theta]$ // new cardinalities
\State $\Delta, \Delta_{-1} \leftarrow s'/n' - s[i]/n[i],  s_{-1}'/n_{-1}' - s[i-1]/n[i-1]$ // changes in the objective function
\State \Return $(\Delta + \Delta_{-1}, s', s_{-1}', n', n_{-1}')$
\EndFunction
\end{algorithmic}
// \Call{GreedyBinning}{} implements the proposed greedy binning algorithm
\begin{algorithmic}[1]
\Function{GreedyBinning}{$Cross$, $n_\tau$, $b$}
\State// The parameters $Cross$, $n_\tau$ and $b$ denote the cross-product matrix to fit, the vector $\mathbf{n}_\tau$ and the number of bins, respectively.
\State $q \leftarrow$ A random vector of size $(b+1)$ containing increasing integers from $0$ to $d$, with q[0]=0$, q[b+1]=d$ 
\State $s, n \leftarrow$ Vectors of size $b$, initialized by zeros.
\State $T \leftarrow 0$ (The inner product (target function) to be maximized)
\State //Initializing the target function and the vectors containing the sums and number of items related to the bins
\For{$i \gets 0$ to $b$}
\For{$j \gets q[i]$ to $q[i+1]$}
\For{$k \gets q[i]$ to $q[i+1]$}
\State $s[i]\leftarrow s[i] + Cross[j,k]$ // the initial contributions of the bins
\EndFor
\State $n[i]\leftarrow n[i] + \mathbf{n}_\tau[i]$ // the initial cardinalities of the bins
\EndFor
\State $T \leftarrow T + s[i]/n[i]$ // the initial objective function
\EndFor
\State // Iteratively checking for the largest improvement by moving bin boundaries one step to the left or right
\Do 
\State $\Delta^* \leftarrow 0$
\For{$i\gets 0$ to $b$} // for all bins (the upper bound exclusive)
\For{$\theta \in \lbrace -1, 0\rbrace$} // relative index of the shrinking bin
\If{$n[i+\theta] > \mathbf{n}_u[q[i]+\theta]$} // if the shrinking bin does not get empty
\State $\delta' \leftarrow 2*\theta + 1$ // transforming relative index to a step -1/+1
\State $\Delta', s', s_{-1}', n', n_{-1}' \leftarrow $\Call{Change}{$i, 2*\theta + 1, s, q, n, Cross, n_\tau$} // the changes
\If{$\Delta' > \Delta^*$} // if the improvement larger than before, record its parameters
\State $\Delta^*, i^*, \delta^*, s^*, s_{-1}^*, n^*, n_{-1}^* \leftarrow \Delta', \delta', i, s', s_{-1}', n', n_{-1}'$
\EndIf
\EndIf
\EndFor
\EndFor
\If{$\Delta^* > 0$} // if there was an improvement, update the values accordingly
\State $T, q[i^*] \leftarrow T + \Delta^*, q[i^*] + \delta$
\State $s[i^*], s[i^*-1], n[i^*], n[i^*-1]\leftarrow s^*, s_{-1}^*, n^*, n_{-1}^*$
\EndIf
\doWhile{$\Delta > 0$}
\State \Return $q$
\EndFunction
\end{algorithmic}
\end{footnotesize}
}
As the following corollary shows, if the distortion is spherical and centered to the origin, that is, it makes no distinction between various directions of the feature space, the expected value reduces to a constant -- in accordance with the \emph{no free lunch theorems} of machine learning \cite{nfl0}: all machine learning techniques of a class (in this case all binnings of $b$ bins) provide the same average performance when evaluated on all possible problems with no structural preference in their distribution (in this case all distorted vectors from some spherical distribution).
\begin{corollary} (No free lunch theorem)
If all elements of $\mathbf{t}$ are unique, and the the distortion has a spherical distribution in the d-dimensional features space of $\mathbf{t}$, that is, $Cov(\mathbf{m})= \mathbb{I}\sigma_\mathbf{m}^2$,
\begin{equation}
\mathbb{E}_\zeta\mathbb{E}_\mathbf{m}D(\mathbf{t}, S_{\boldsymbol\tau}\mathbf{m} + \zeta) \simeq 
\dfrac{\sigma^2_\mathbf{m}(d - b) + \sigma^2(d - b)}{\sigma^2_\mathbf{m}(d - 1) + \sigma^2(d-1)}.
\end{equation}
\end{corollary}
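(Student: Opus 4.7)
The plan is to specialize the formula in Proposition~\ref{st-distorted} under the two hypotheses of the corollary. First, the assumption that all coordinates of $\mathbf{t}$ are distinct forces $d_{\boldsymbol\tau} = d$, so that the full-rank slice transform $S_{\boldsymbol\tau}$ is the $d\times d$ identity matrix and $\mathbf{n}_{\boldsymbol\tau} = \mathbf{1}_d$, the all-ones vector. Second, the sphericity assumption $\mathrm{Cov}(\mathbf{m}) = \sigma^2_\mathbf{m}\mathbb{I}$, together with the preceding paragraph's stipulation that the distortion is centered at the origin (i.e.\ $\mathbb{E}_\mathbf{m}\mathbf{m} = \mathbf{0}$), combined with the identity $\text{Cross}(\mathbf{m}) = \mathrm{Cov}(\mathbf{m}) + (\mathbb{E}_\mathbf{m}\mathbf{m})(\mathbb{E}_\mathbf{m}\mathbf{m})^T$, yields $\text{Cross}(\mathbf{m}) = \sigma^2_\mathbf{m}\mathbb{I}$ and componentwise $\mathbb{E}_\mathbf{m}\mathbf{m}_i^2 = \sigma^2_\mathbf{m}$ for each $i$.

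Under these specializations, each building block of equation~(\ref{the-solution}) collapses to a closed-form expression. The inner product $\langle \mathbf{n}_{\boldsymbol\tau}, \mathbb{E}_\mathbf{m}\mathbf{m}^2\rangle$ reduces to $d\sigma^2_\mathbf{m}$. The conjugated matrix $S_{\boldsymbol\tau}\text{Cross}(\mathbf{m})S_{\boldsymbol\tau}^T$ simplifies to $\sigma^2_\mathbf{m}\mathbb{I}$, so the Frobenius product becomes $\sigma^2_\mathbf{m}\langle A, \mathbb{I}\rangle_F = \sigma^2_\mathbf{m}\,\mathrm{tr}(A)$; because $A$ is the orthogonal projection onto the $b$-dimensional column space of $S$, its trace equals its rank $b$, so the term equals $b\sigma^2_\mathbf{m}$. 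Finally, $\tfrac{1}{d}\mathbf{n}_{\boldsymbol\tau}\text{Cross}(\mathbf{m})\mathbf{n}_{\boldsymbol\tau}^T = \tfrac{\sigma^2_\mathbf{m}}{d}\|\mathbf{1}_d\|^2 = \sigma^2_\mathbf{m}$.

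Substituting these into Proposition~\ref{st-distorted}, the numerator becomes $d\sigma^2_\mathbf{m} - b\sigma^2_\mathbf{m} + \sigma^2(d-b) = \sigma^2_\mathbf{m}(d-b) + \sigma^2(d-b)$ and the denominator becomes $d\sigma^2_\mathbf{m} - \sigma^2_\mathbf{m} + \sigma^2(d-1) = \sigma^2_\mathbf{m}(d-1) + \sigma^2(d-1)$, which matches the claimed identity. The argument is essentially algebraic and there is no real obstacle; the two observations that do the work are (i) sphericity together with zero mean collapses the cross-product matrix to a scalar multiple of the identity, thereby erasing any dependence on the matrix $A$ beyond its trace, and (ii) an orthogonal projection onto a $b$-dimensional subspace has trace equal to $b$, which is precisely what decouples the numerator from the particular choice of binning and produces the promised no-free-lunch constant.
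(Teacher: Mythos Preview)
Your proof is correct and follows essentially the same route as the paper: specialize Proposition~\ref{st-distorted} by substituting $\text{Cross}(\mathbf{m})=\sigma_\mathbf{m}^2\mathbb{I}$ and $\mathbb{E}_\mathbf{m}\mathbf{m}^2=\sigma_\mathbf{m}^2$, then use $\mathrm{tr}(A)=b$ to collapse the Frobenius term. Your version is simply more explicit in spelling out why uniqueness of the $\mathbf{t}_i$ forces $S_{\boldsymbol\tau}=\mathbb{I}$ and $\mathbf{n}_{\boldsymbol\tau}=\mathbf{1}_d$, and in evaluating each building block separately, whereas the paper compresses all of this into a one-line substitution remark together with a comment on why the unicity hypothesis is needed.
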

\begin{proof}
One can readily see by substituting $\sigma_\mathbf{m}^2$ in place of $\mathbb{E}_\mathbf{m}\mathbf{m}^2$ and $\mathbb{I}\sigma_m^2$ in place of $Cross(\mathbf{m})$ in Proposition \ref{st-distorted}, and utilizing $tr(A)=b$. Without the unicity constraint on the elements of $\mathbf{t}$, the ties imply non-diagonal non-zero entries in $S_{\boldsymbol\tau} Cross(\mathbf{m})S_{\boldsymbol\tau}^T$, and $\langle A, S_{\boldsymbol\tau} Cross(\mathbf{m})S_{\boldsymbol\tau}^T\rangle_F$ would not reduce to $tr(A)\sigma^2_\mathbf{m}=b\sigma^2_\mathbf{m}$.
\end{proof}

As a special case of Proposition \ref{st-distorted}, one can expect that the distortion $\mathbf{m}$ is such that it maps $\mathbf{t}$ \emph{close} to itself. This closeness can be modelled by a distribution which has the mean $\mathbb{E}_\mathbf{m}\mathbf{m} = \boldsymbol\tau$, where $\boldsymbol\tau$ is the vector of unique elements of $\mathbf{t}$ in an increasing order. The following proposition provides an insight into the effect of localized distortions: in this case, the ideal quantization requires $A$ to match the covariance structure of the distortion and also requires the minimization of the representation error of $\mathbf{t}$ made by the binning.

\begin{proposition} (On the expected value of the measure with a localized distortion)
\label{st-reduced-1}
Using the notations introduced before, with $\mathbf{m'} = \mathbf{m} - \boldsymbol\tau$ if $\mathbb{E}_\mathbf{m}\mathbf{m} = \boldsymbol\tau$, then
\begin{equation}
\label{reduced-1}
\mathbb{E}_{\boldsymbol\zeta}\mathbb{E}_\mathbf{m} D(\mathbf{t},S_{\boldsymbol\tau}\mathbf{m} + \boldsymbol\zeta)\simeq\\
\dfrac{\Vert A\mathbf{t} - \mathbf{t} \Vert^2 + \langle \mathbf{n}_{\boldsymbol\tau}, \mathbb{E}_\mathbf{m}\mathbf{m'}^2\rangle - \langle A, S_{\boldsymbol\tau}\text{Cov}(\mathbf{m'})S_{\boldsymbol\tau}^T\rangle_F + \sigma^2(d - b)}
{d\text{var}(\mathbf{t}) + \langle \mathbf{n}_{\boldsymbol\tau}, \mathbb{E}_\mathbf{m}\mathbf{m'}^2\rangle - \dfrac{1}{d}\mathbf{n}_{\boldsymbol\tau} Cov(\mathbf{m'}) \mathbf{n}_{\boldsymbol\tau}^T + \sigma^2\left(d - 1\right)}.
\end{equation}

\end{proposition}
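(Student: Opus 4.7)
The plan is to derive the statement as a direct specialization of Proposition \ref{st-distorted} under the extra assumption $\mathbb{E}_{\mathbf{m}}\mathbf{m} = \boldsymbol\tau$. The substitution $\mathbf{m} = \boldsymbol\tau + \mathbf{m}'$ with $\mathbb{E}_{\mathbf{m}'}\mathbf{m}' = \mathbf{0}$ turns $\text{Cross}(\mathbf{m})$ into a rank-one shift of $\text{Cov}(\mathbf{m}')$, namely $\text{Cross}(\mathbf{m}) = \boldsymbol\tau\boldsymbol\tau^T + \text{Cov}(\mathbf{m}')$, and the coordinate-wise second moment becomes $\mathbb{E}_{\mathbf{m}}\mathbf{m}^2 = \boldsymbol\tau^2 + \mathbb{E}_{\mathbf{m}}\mathbf{m}'^2$. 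Substituting these two identities into the numerator and denominator of (\ref{the-solution}) separates each into a ``template part'' depending only on $\mathbf{t}$ and $\boldsymbol\tau$, and a ``distortion part'' depending on $\mathbf{m}'$, which is already in the desired form.

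Next I would simplify the template parts by using the bookkeeping identity $S_{\boldsymbol\tau}\boldsymbol\tau = \mathbf{t}$ and the fact that $(\mathbf{n}_{\boldsymbol\tau})_k$ counts how many coordinates of $\mathbf{t}$ equal $\boldsymbol\tau_k$. This gives $\langle \mathbf{n}_{\boldsymbol\tau}, \boldsymbol\tau^2\rangle = \sum_j \mathbf{t}_j^2 = \Vert \mathbf{t}\Vert^2$ and $\mathbf{n}_{\boldsymbol\tau}\boldsymbol\tau = \sum_j \mathbf{t}_j$. The Frobenius term involving $\boldsymbol\tau\boldsymbol\tau^T$ collapses via $\langle A, S_{\boldsymbol\tau}\boldsymbol\tau\boldsymbol\tau^T S_{\boldsymbol\tau}^T\rangle_F = \mathbf{t}^T A \mathbf{t}$, so in the numerator the template part becomes $\Vert\mathbf{t}\Vert^2 - \mathbf{t}^T A \mathbf{t}$. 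Since $A$ is a symmetric idempotent projection (Lemma \ref{lem-A}), the identity $\Vert A\mathbf{t} - \mathbf{t}\Vert^2 = \mathbf{t}^T(I-A)\mathbf{t} = \Vert\mathbf{t}\Vert^2 - \mathbf{t}^T A\mathbf{t}$ immediately produces the desired $\Vert A\mathbf{t} - \mathbf{t}\Vert^2$ term.

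For the denominator, the template part is $\Vert\mathbf{t}\Vert^2 - \tfrac{1}{d}(\mathbf{n}_{\boldsymbol\tau}\boldsymbol\tau)(\mathbf{n}_{\boldsymbol\tau}\boldsymbol\tau)^T = \sum_j \mathbf{t}_j^2 - \tfrac{1}{d}\bigl(\sum_j \mathbf{t}_j\bigr)^2$, which is exactly $d\,\text{var}(\mathbf{t})$ under the empirical (biased) variance convention used throughout the paper. Combining everything yields the stated expression.

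The computation is essentially bookkeeping; the only step that requires a bit of care is verifying that the $\boldsymbol\tau\boldsymbol\tau^T$ cross term in the denominator, when contracted against $\tfrac{1}{d}\mathbf{n}_{\boldsymbol\tau}(\cdot)\mathbf{n}_{\boldsymbol\tau}^T$, cleanly reproduces $d\bar{\mathbf{t}}^{\,2}$, so that the residual $\Vert\mathbf{t}\Vert^2 - d\bar{\mathbf{t}}^{\,2}$ coincides with $d\,\text{var}(\mathbf{t})$; this is the main place where the correspondence between $\mathbf{n}_{\boldsymbol\tau}$-weighted sums over $\boldsymbol\tau$ and unweighted sums over $\mathbf{t}$ has to be invoked in both the first- and second-moment form. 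Once that identity is in hand, the proposition follows by inspection from Proposition \ref{st-distorted}.
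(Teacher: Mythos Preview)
Your proposal is correct and follows essentially the same route as the paper: both start from Proposition \ref{st-distorted}, substitute $\mathbf{m}=\boldsymbol\tau+\mathbf{m}'$ so that $\text{Cross}(\mathbf{m})=\boldsymbol\tau\boldsymbol\tau^T+\text{Cov}(\mathbf{m}')$ and $\mathbb{E}_\mathbf{m}\mathbf{m}^2=\boldsymbol\tau^2+\mathbb{E}_\mathbf{m}\mathbf{m}'^2$, use $S_{\boldsymbol\tau}\boldsymbol\tau=\mathbf{t}$ together with the idempotence of $A$ to turn $\Vert\mathbf{t}\Vert^2-\mathbf{t}^TA\mathbf{t}$ into $\Vert A\mathbf{t}-\mathbf{t}\Vert^2$, and identify the template part of the denominator with $d\,\text{var}(\mathbf{t})$. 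The only cosmetic difference is that the paper recovers the denominator by going back to the empirical variance $d\,\text{var}(\mathbf{t}+S_{\boldsymbol\tau}\mathbf{m}'+\boldsymbol\zeta)$ and splitting it, whereas you substitute directly into the already-evaluated denominator of (\ref{the-solution}); both are equivalent bookkeeping.
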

\begin{proof}
For the proof, see Appendix \ref{proof-reduced-1}
\end{proof}

As a consequence of the proposition, if the distortion is centered to $\mathbf{t}$ in the sense that $\mathbb{E}_\mathbf{m}\mathbf{m} = \boldsymbol\tau$, the ideal binning jointly minimizes the representation error of the binning $\Vert A\mathbf{t}- \mathbf{t}\Vert^2$ and the alignment of the binning with the covariance structure of the distortion $Cov(\mathbf{m})$. According to Lemma \ref{lem-clustering}, the representation error could be minimized by solving the k-means clustering problem (applying some k-means clustering technique like the well known ML-EM), however, the alignment of the binning and the covariance structure is not minimized by it, thus, in these cases still the optimization method formulated in Theorem \ref{the-theorem} and Algorithm \ref{alg1} is recommended.

It is also reasonable to suppose that the distortion is not only centered to $\mathbf{t}$, but spherical. The following proposition and theorem show that in these cases solving the k-means clustering problem leads to the ideal quantization.

\begin{proposition} (On the expected value of the measure with spherically distributed distortion)
\label{st-reduced-2}
If $\mathbf{t}$ has unique elements, $\mathbf{E}_\mathbf{m}\mathbf{m} = \boldsymbol\tau$ and $Cov(\mathbf{m} - \boldsymbol\tau) = \mathbb{I}\sigma_\mathbf{m'}^2$, then
\begin{equation}
\label{reduced-2}
\mathbb{E}_{\zeta}\mathbb{E}_\mathbf{m} D(\mathbf{t},S_{\boldsymbol\tau}\mathbf{m} + \boldsymbol\zeta)\simeq\\
\dfrac{\Vert A\mathbf{t} - \mathbf{t} \Vert^2 + \sigma_{\mathbf{m}'}^2(d - b) + \sigma^2(d - b)}
{d\text{var}(\mathbf{t}) + \sigma_{\mathbf{m'}}^2\left(d - 1\right) + \sigma^2\left(d - 1\right)}.
\end{equation}
\end{proposition}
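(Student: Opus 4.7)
The plan is to derive this proposition as a direct specialization of Proposition \ref{st-reduced-1}. The hypotheses here (unique coordinates of $\mathbf{t}$, centering of $\mathbf{m}$ at $\boldsymbol\tau$, and spherical $\text{Cov}(\mathbf{m}')$) collapse every structured term in (\ref{reduced-1}) into a scalar multiple of $d$, $b$, or $\sigma_{\mathbf{m}'}^2$, so the argument is really a sequence of substitutions.

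First I would record the structural simplifications implied by the uniqueness of the coordinates of $\mathbf{t}$. In this case $d_{\boldsymbol\tau} = d$, the full-rank slice transform $S_{\boldsymbol\tau}$ has exactly one $1$ in every row and every column, i.e.\ it is a permutation matrix, so $S_{\boldsymbol\tau}^T S_{\boldsymbol\tau} = S_{\boldsymbol\tau} S_{\boldsymbol\tau}^T = \mathbb{I}$, and the cardinality vector $\mathbf{n}_{\boldsymbol\tau}$ is the all-ones vector in $\mathbb{R}^d$. Consequently, for any matrix $M\in\mathbb{R}^{d\times d}$ one has $S_{\boldsymbol\tau} M S_{\boldsymbol\tau}^T$ equal to $M$ up to a simultaneous row/column permutation, which preserves both traces and all-ones quadratic forms.

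Next I would reduce the three distortion-dependent terms appearing in (\ref{reduced-1}). The sphericity assumption $\text{Cov}(\mathbf{m}') = \mathbb{I}\sigma_{\mathbf{m}'}^2$ combined with $\mathbb{E}_\mathbf{m}\mathbf{m}' = 0$ gives $\mathbb{E}_\mathbf{m}(\mathbf{m}'_i)^2 = \sigma_{\mathbf{m}'}^2$ for every $i$, so that
\begin{equation}
\langle \mathbf{n}_{\boldsymbol\tau}, \mathbb{E}_\mathbf{m}\mathbf{m}'^2\rangle = d\,\sigma_{\mathbf{m}'}^2,
\qquad
\tfrac{1}{d}\mathbf{n}_{\boldsymbol\tau}\,\text{Cov}(\mathbf{m}')\,\mathbf{n}_{\boldsymbol\tau}^T = \tfrac{1}{d}\,\mathbf{1}^T(\sigma_{\mathbf{m}'}^2\mathbb{I})\mathbf{1} = \sigma_{\mathbf{m}'}^2.
\end{equation}
For the Frobenius inner product, the permutation-matrix property yields $S_{\boldsymbol\tau}\,\text{Cov}(\mathbf{m}')\,S_{\boldsymbol\tau}^T = \sigma_{\mathbf{m}'}^2\mathbb{I}$, hence
\begin{equation}
\langle A,\, S_{\boldsymbol\tau}\text{Cov}(\mathbf{m}')S_{\boldsymbol\tau}^T\rangle_F = \sigma_{\mathbf{m}'}^2 \operatorname{tr}(A) = b\,\sigma_{\mathbf{m}'}^2,
\end{equation}
using the fact that $A$, being an orthogonal projection onto a $b$-dimensional subspace (the column space of $S$), has trace equal to its rank $b$.

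Substituting these three reductions into the numerator and denominator of (\ref{reduced-1}) gives numerator $\|A\mathbf{t}-\mathbf{t}\|^2 + d\sigma_{\mathbf{m}'}^2 - b\sigma_{\mathbf{m}'}^2 + \sigma^2(d-b) = \|A\mathbf{t}-\mathbf{t}\|^2 + \sigma_{\mathbf{m}'}^2(d-b) + \sigma^2(d-b)$, and denominator $d\,\text{var}(\mathbf{t}) + d\sigma_{\mathbf{m}'}^2 - \sigma_{\mathbf{m}'}^2 + \sigma^2(d-1) = d\,\text{var}(\mathbf{t}) + \sigma_{\mathbf{m}'}^2(d-1) + \sigma^2(d-1)$, matching (\ref{reduced-2}). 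There is no real obstacle here; the only point requiring care is correctly handling $S_{\boldsymbol\tau}$ under the uniqueness assumption and recalling that $\operatorname{tr}(A) = b$ for the projection matrix $A$ onto a $b$-dimensional subspace.
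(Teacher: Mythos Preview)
Your proof is correct and follows essentially the same approach as the paper: both specialize Proposition~\ref{st-reduced-1} by evaluating the three distortion-dependent terms $\langle \mathbf{n}_{\boldsymbol\tau}, \mathbb{E}_\mathbf{m}\mathbf{m}'^2\rangle = d\sigma_{\mathbf{m}'}^2$, $\langle A, S_{\boldsymbol\tau}\text{Cov}(\mathbf{m}')S_{\boldsymbol\tau}^T\rangle_F = b\sigma_{\mathbf{m}'}^2$, and $\mathbf{n}_{\boldsymbol\tau}\text{Cov}(\mathbf{m}')\mathbf{n}_{\boldsymbol\tau}^T = d\sigma_{\mathbf{m}'}^2$, then substituting into~(\ref{reduced-1}). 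Your write-up is somewhat more explicit about why uniqueness makes $S_{\boldsymbol\tau}$ a permutation matrix and why $\operatorname{tr}(A)=b$, but the argument is the same.
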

\begin{proof}
Substituting the evaluations 
\begin{equation}
\langle \mathbf{n}_{\boldsymbol\tau}, \mathbb{E}_\mathbf{m}\mathbf{m'}^2\rangle = d\sigma^2_\mathbf{m'}, \quad\quad\\
\langle A, S_{\boldsymbol\tau} Cov(\mathbf{m'})S_{\boldsymbol\tau}^T\rangle_F = b\sigma^2_\mathbf{m'}, \quad\quad\\
\mathbf{n}_{\boldsymbol\tau} Cov(\mathbf{m'})\mathbf{n}_{\boldsymbol\tau}^T = d\sigma^2_\mathbf{m'}\nonumber
\end{equation}
into (\ref{reduced-1}) with $\mathbf{m'}=\mathbf{m} - \boldsymbol\tau$ completes the proof.
\end{proof}

\begin{theorem} (On the optimization of the binning for spherically distributed distortion)
\label{th2}
If the elements of $\mathbf{t}$ are unique and the distortion is centered to $\mathbf{t}$ with a spherical distribution, the ideal binning can be determined by solving the k-means clustering problem for the elements of $\mathbf{t}$.
\end{theorem}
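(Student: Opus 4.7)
The plan is to chain together three earlier results: Proposition \ref{st-noise}, Proposition \ref{st-reduced-2}, and Lemma \ref{lem-clustering}. Under the theorem's hypotheses, the optimality criterion (\ref{optimality-criterion}) reduces to a pure representation-error minimization over $A\in\mathcal{A}(\mathbf{t}, b)$, and by the lemma this is exactly k-means.

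First I would recall that by Proposition \ref{st-noise}, the term $\mathbb{E}_{\boldsymbol\xi} D(\mathbf{t}, \boldsymbol\xi) \simeq (d-b)/(d-1)$ depends only on the dimension $d$ and the fixed bin count $b$, not on the structure of $A$. Consequently, maximizing (\ref{optimality-criterion}) for a fixed $b$ is equivalent to minimizing $\mathbb{E}_{\boldsymbol\zeta}\mathbb{E}_\mathbf{m} D(\mathbf{t}, S_{\boldsymbol\tau}\mathbf{m} + \boldsymbol\zeta)$.

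Next I would apply Proposition \ref{st-reduced-2}, whose hypotheses (unique elements of $\mathbf{t}$, $\mathbb{E}_\mathbf{m}\mathbf{m} = \boldsymbol\tau$, $\text{Cov}(\mathbf{m}-\boldsymbol\tau) = \mathbb{I}\sigma_{\mathbf{m}'}^2$) are exactly the hypotheses of the present theorem. Inspecting expression (\ref{reduced-2}), the denominator $d\,\text{var}(\mathbf{t}) + \sigma_{\mathbf{m}'}^2(d-1) + \sigma^2(d-1)$ is independent of $A$, and in the numerator the only $A$-dependent summand is $\Vert A\mathbf{t} - \mathbf{t}\Vert^2$ (the terms $\sigma_{\mathbf{m}'}^2(d-b)$ and $\sigma^2(d-b)$ are fixed once $b$ is fixed). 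Since both the denominator is positive and the remaining constants are positive, minimizing the ratio over $A\in\mathcal{A}(\mathbf{t}, b)$ is equivalent to minimizing $\Vert A\mathbf{t} - \mathbf{t}\Vert^2$.

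Finally, invoking Lemma \ref{lem-clustering}, the minimizer of $\Vert A\mathbf{t} - \mathbf{t}\Vert^2$ over $\mathcal{A}(\mathbf{t}, b)$ is obtained by solving the k-means clustering problem on the coordinates of $\mathbf{t}$ into $b$ clusters and building $\hat{A}$ from the resulting partition. This yields the ideal binning under the theorem's assumptions.

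There is essentially no hard step: the argument is a bookkeeping exercise identifying which terms in (\ref{reduced-2}) depend on $A$. The only subtle point worth emphasizing is that the first-order ratio approximation of Section \ref{ratio-dist} is already baked into Propositions \ref{st-noise} and \ref{st-reduced-2}, so the conclusion holds in that same approximation rather than exactly; also, the uniqueness assumption on $\mathbf{t}$ is needed precisely to ensure that $S_{\boldsymbol\tau} \text{Cov}(\mathbf{m}')S_{\boldsymbol\tau}^T$ collapses to $\mathbb{I}\sigma_{\mathbf{m}'}^2$ with no off-diagonal contributions, which is what lets the covariance-alignment term drop out and leaves only the k-means objective.
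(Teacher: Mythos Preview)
Your proposal is correct and follows essentially the same route as the paper: identify that in (\ref{reduced-2}) the only $A$-dependent term is $\Vert A\mathbf{t}-\mathbf{t}\Vert^2$, then invoke Lemma \ref{lem-clustering} to conclude that the minimizer is the k-means solution. The paper's proof is terser (it does not separately invoke Proposition \ref{st-noise} or discuss the denominator), but the logical content is the same.
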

\begin{proof}
The theorem is a consequence of Lemma \ref{lem-clustering} and $\Vert A\mathbf{t}- \mathbf{t}\Vert^2$ being the only $A$ dependent term in the numerator of (\ref{reduced-2}).
\end{proof}
As a consequence of Theorem \ref{th2}, when only white noise is expected, still the solution of the k-means clustering problem provides the ideal binning. 

We highlight that Theorem \ref{the-theorem} provides the general conditions of ideal binning applicable to any assumptions on $Cross(\mathbf{m})$. The greedy algorithm proposed in Algorithm \ref{alg1} finds an approximating solution but due to the combinatorial nature of the problem, it does not guarantee a global optimum. When the distortions imply that bin selection turns into the k-means clustering problem, advanced techniques developed to find the exact solution of the k-means clustering problem in 1D can be exploited to find the ideal solution \cite{exact-k-means}.
Finally, one can readily see, that although the results are based on the first-order approximation, when the unexplained variance measure is not normalized (making it analogous to MI), the same conditions on the optimal binning are exact.



\subsection{Estimation of the cross-product matrix of the distortion}

In order to determine the ideal quantization for a template, one needs to make assumptions on the cross-product structure of the expected distortions $\mathbf{m}$. 
If the distortions are known to come from a particular class of functions, one can estimate the cross-product matrix by sampling the class of functions, apply each sample function to the unique elements of $\mathbf{t}$ and compute the empirical cross-product matrix of the resulting vectors. For example, given a template $\mathbf{t}$ in an image processing application, expecting gamma distortions ($\mathcal{M}(\mathbf{t})= \mathbf{t}^\gamma$), with $\gamma\in[1, 10]$ (related to over-exposition), one can determine $\boldsymbol\tau$ as the vector of unique elements in $\mathbf{t}$, and estimate the $Cross(\mathbf{m})$ matrix by sampling $\gamma_i, i=1, \dots, N$ from $\mathcal{U}(1, 10)$ and
$
Cross(\mathbf{m}) \simeq \sum\limits_{i=1}^{N} \mathbf{m}_i \mathbf{m}_i^T/N,
$
where $\mathbf{m}_i = \boldsymbol\tau^{\gamma_i}$.

\subsubsection{The probability of ideal binning}
Working with digital signals, due to the finite precision of representation and/or sensitivity of acquisition devices, the space of windows is usually a bounded subset $\mathcal{B}\subset\mathbf{R}^d$ with volume $V(\mathcal{B})$. If the assumptions on $Cross(\mathbf{m})$ are valid in a $V^{*}$ volume of the feature space and the ideal bins are determined by these assumptions, one can readily see that the probability of the ideal operation of the nUV measure becomes $\dfrac{V^{*}}{V(\mathcal{B})}$, thus, using the proposed binning techniques can still improve the separation power of PWC nUV, even though the assumptions are valid only in a subset of the entire space $\mathcal{B}$.

\section{Tests and Results}
\label{secres}

As a dissimilarity measure highly analogous to MI, nUV has numerous potential applications from template matching through registration \cite{mi-book} to feature selection in machine learning \cite{mi-feature}. Due to this generality of the measure, the theoretical nature of the results we derived, and space limitations, we do not evaluate the measure on real data. The goal of the numerical experiments is twofold, summarized as follows.

\emph{Testing the accuracy of first-order approximations.}
By simulations we show quantitatively and illustrate qualitatively that the formulae derived in the previous section are aligned with the measurements, thus,  the first-order statistical approximation is acceptable in the scope of the experimental settings. This is carried out by simulating templates, noisy windows, and distorted templates and computing and comparing the predictions of propositions \ref{st-noise}, \ref{st-distorted} and \ref{st-reduced-2} with the real dissimilarity scores.

\emph{Testing the pattern recognition performance in terms of AUC.}
We characterize quantitatively how much improvement can be achieved by using the proposed binning techniques in pattern recognition scenarios. In each test case, for each binning technique we record if $D(\mathbf{t}, S_{\boldsymbol\tau}\mathbf{m} + \boldsymbol\zeta) < D(\mathbf{t}, \boldsymbol\xi)$ holds (indicating the correct recognition of the distorted template). We note that the percentage of correct recognitions is an estimation of the probability that a randomly chosen positive sample (distorted template) will have a smaller dissimilarity score than a randomly chosen negative sample (noisy window). This estimation is equivalent to one of the common interpretations of the widely used AUC score (Area Under the receiver operating Curve) \cite{auc}.


All results reproducable by the codes with fixed random seeds in the GitHub repository \url{https://github.com/gykovacs/ideal_binning_nuv}.

\subsection{One test case}

The computational steps in one test case are summarized as follows.

\emph{Sampling of a random template.} A random template ($\mathbf{t}\in\mathbb{R}^d$) is generated with a random dimensionality $d\in\lbrace 100,\dots,1000\rbrace$; the templates are vectors from three different distributions with equal probabilities: standard normal distribution ($\mathbf{t}\sim \mathcal{N}^d(0, 1)$), uniform distribution ($\mathbf{t}\sim \mathcal{U}^d(0, 1)$), and a distribution composed from two normals ($\mathbf{t}_i \sim \mathcal{N}(0, 1) \text{ or } \mathcal{N}(2, 1)$) -- this composition is used to generate templates with non-unimodal intensity distributions. Then, the templates are normalized into the range $[0, 1]$ and a random exponent $\gamma\in\lbrace 1/3, 1/2, 1, 2, 3 \rbrace$ is used to adjust the template by $\mathbf{t}^\gamma$ to alter its intensity distribution (in image processing this type of exponential adjustment is related to under- and over-exposition). Finally, when general distortions are evaluated with greedy binning, the intensity values in the template are rounded to 3 digits, in this way introducing some ties between template values which is also usual in digital signal processing. When Theorem \ref{th2} is examined, due to the unicity constraint on the values of $\mathbf{t}$, rounding is not applied.

\emph{Sampling of a noisy window.} A noisy window ($\boldsymbol\xi$) and the white noise vector used to distort the template ($\boldsymbol\zeta$) is generated from a normal distribution with uniformly random standard deviation $\sigma\sim\mathcal{U}(0.1, 2.0)$.

\emph{Sampling a distorted template and the cross-product structure.} 

First, the full-rank decomposition of $\mathbf{t}$ is being determined: $\mathbf{t}= S_{\boldsymbol\tau} \boldsymbol\tau$. For general distortions, a random mean vector $\mathbf{\boldsymbol\mu}_\mathbf{m}\sim\mathbf{N}^{d_{\boldsymbol\tau}}(0, 1)$ and a covariance matrix $Cov(\mathbf{m})$ are sampled. For spherical distortions, $\boldsymbol\mu_\mathbf{m} = \boldsymbol\tau$ and $Cov(\mathbf{m}) = \mathbb{I}\sigma_\mathbf{m}^2$, with $\sigma_\mathbf{m}^2 \sim \mathcal{U}(0.1, 2.0)$
    The cross-product matrix is computed as $Cross(\mathbf{m}) = Cov(\mathbf{m}) + \boldsymbol\tau \boldsymbol\tau^T$.
 A distortion vector is being sampled as $\mathbf{m}\sim\mathcal{N}(\boldsymbol\mu_{\mathbf{m}}, Cov(\mathbf{m}))$ and a distorted template ($S_{\boldsymbol\tau}\mathbf{m} + \boldsymbol\zeta$) is generated.

\emph{Binning.} The binning of the template $\mathbf{t}$ is carried out by equal width (EQW), equal frequency (EQF), k-means and greedy binning for various bin numbers. 

\emph{Calculation of dissimilarity scores.} The approximations of the expected values by Propositions \ref{st-noise}, \ref{st-distorted} and \ref{st-reduced-2} and the true dissimilarity scores $D(\mathbf{t}, \boldsymbol\xi)$ and $D(\mathbf{t}, S_{\boldsymbol\tau}\mathbf{m} + \boldsymbol\zeta)$ are computed.

All the fixed, constant parameters used in the simulations 
are selected to cover a reasonably wide range of possible applications, template structures, intensity distributions, and signal-to-noise ratios.

\subsection{Aggregation of the results}

The results of propositions \ref{st-distorted} and \ref{st-reduced-2} provide formulas for the expected values of the PWC nUV measure when the possible distortions have a particular cross-product structure. There can be numerous meaningful cross-product structures representing the possible distortions in various fields of applications. Picking any of them would deteriorate the generality of the experiments and due to space limitations, we can not examine many different structures in detail. Therefore in the test cases, almost all parameters of the templates, distortions, and noisy windows are being sampled. For the analysis, the computed dissimilarity scores are averaged for the entire population, which enables us to draw conclusions about the operation of the measure with distortions from many different cross-product structures. In the rest of the section, the averages of the expected values are denoted by $\overline{\mathbb{E}D(\mathbf{t},\mathbf{w})}$, and the averages of the computed dissimilarity scores are denoted by $\overline{D(\mathbf{t}, \mathbf{w})}$.

One can expect that for templates with varying sizes, varying numbers of bins might be ideal for template matching.
In order to compensate this variation in the sizes and enable the meaningful aggregation of the results, the numbers of bins for each test case are varied as follows. All the figures are computed for $2$ and $5$ bins, and for the number of bins determined by the Sturges-formula ($b_{st} = \lceil\log_2 d_{\boldsymbol\tau}\rceil + 1$), the Rice-rule ($b_{r} = \lceil 2 d_{\boldsymbol\tau}^{1/3}\rceil$) and the square root rule ($b_{sq} = \lceil \sqrt{d_{\boldsymbol\tau}}\rceil$), where $d_{\boldsymbol\tau}$ denotes the number of unique elements in $\mathbf{t}$. One can easily confirm that in the range of the experiments usually $5 \leq b_{st} \leq b_r \leq b_{sq}$ holds, therefore, we found it meaningful to plot the aggregated results in one figure and connect them with lines for a better visualization of trends, even though the values $b_{st}$, $b_{r}$ and $b_{sq}$ depend on the sizes of the templates.

\subsection{General distortions and greedy binning}

We have executed 5000 experiments of the test cases described above, and plotted the aggregated results in Figure \ref{res_greedy}, with the standard deviations denoted by vertical lines with a minor horizontal shift for visibility. As one can observe, the predictions of Propositions \ref{st-noise} and \ref{st-distorted} for the means of the distributions and the means of the real scores are very well aligned, with the highest relative difference of $0.03\%$ for the noise and $0.08\%$ for the distorted population. From these, we can conclude that despite the first-order approximation (subsection \ref{section32}), the formulae in propositions \ref{st-noise} and \ref{st-distorted} are close enough to the real values in the tested scenarios to expect an improvement in the separation power of PWC nUV by Theorem \ref{the-theorem} and Algorithm \ref{alg1}. 

The AUC scores for the various binning techniques are summarized in Figure \ref{res_greedy_b} and aggregated in Table \ref{res_kmeans2} with the matrix of p-values of the McNemar tests for the equality of the scores. 
One can observe that the greedy binning outperforms both EQW (by 13\% in aggregation) and EQF (by 26\% in aggregation) with statistical significance, providing a numerical validation for Theorem 1 in the scope of the experiments. Interestingly, although k-means binning is proved to be ideal when the distortion is spherically distributed around the template, it is outperforming EQW (by 4\%) and EQF (by 17\%) with statistical significance, indicating that there can be further configurations when k-means binning works well.
Comparing the greedy binning and the k-means binning, the most remarkable difference is that for the k-means binning there is no need for the estimation of the cross-product or covariance structure of the distortions. Consequently, the results suggest that even in the lack of any knowledge on the possible distortions, using k-means binning instead of EQW could improve the matching results with statistical significance.

\begin{figure}[t]
     \subfigure[\label{res_greedy_a}]{\includegraphics[width=0.5\textwidth]{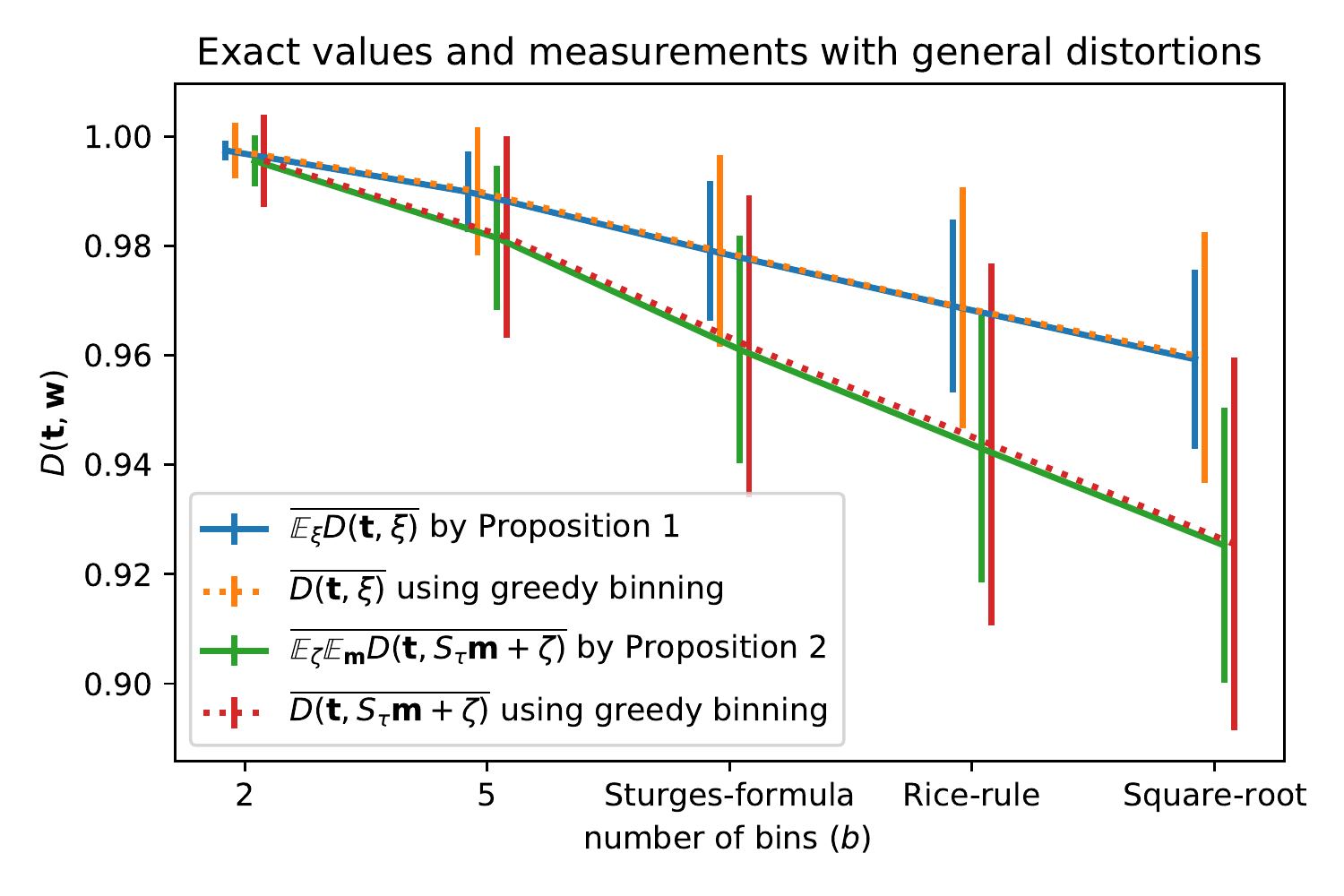}}
     \subfigure[\label{res_greedy_b}]{\includegraphics[width=0.5\textwidth]{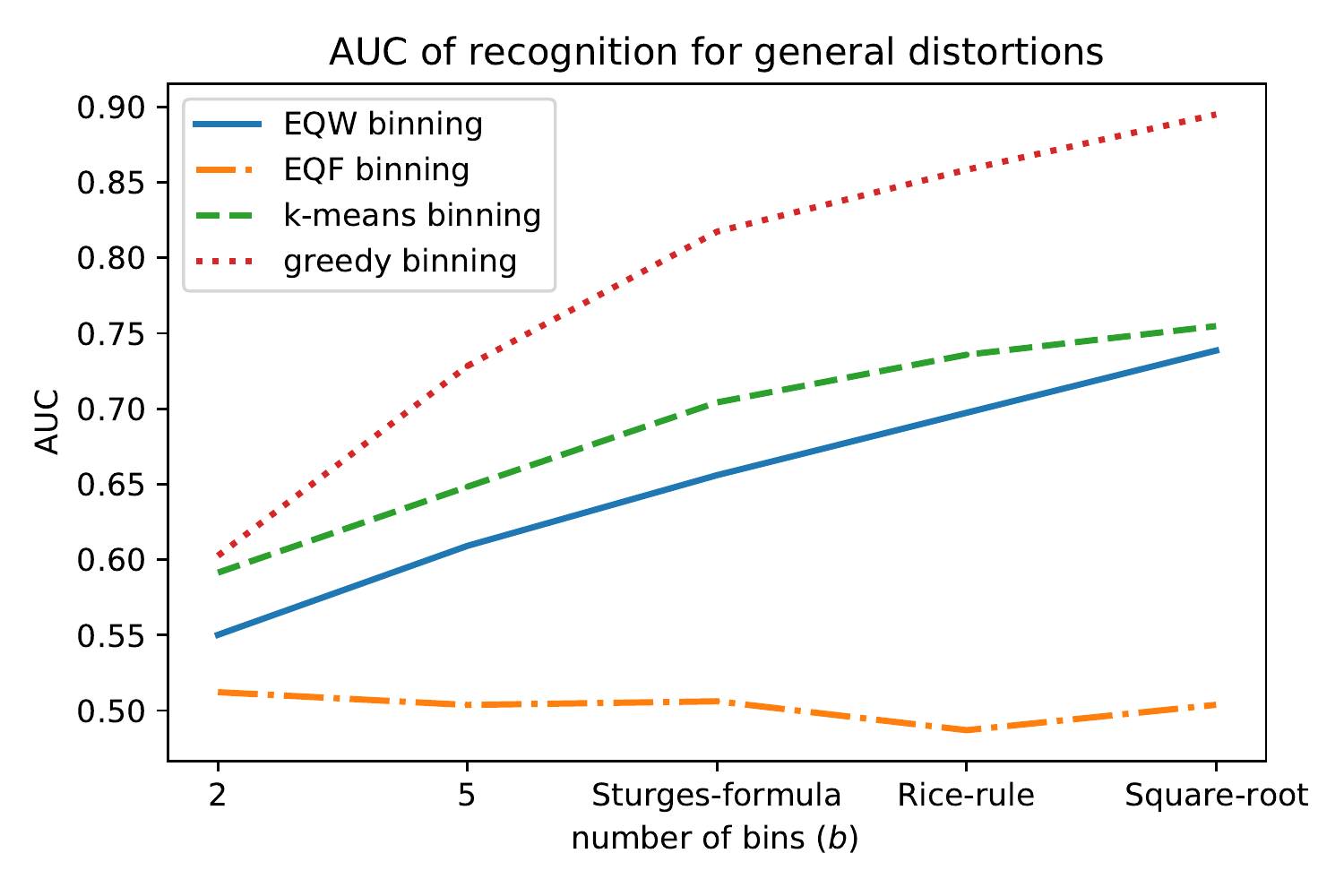}}
     
     \caption{Results for general distortions: fitting of the measurements to the theoretical values (a); AUC scores (b).}
     \label{res_greedy}
\end{figure}

\begin{table}
\begin{center}
\begin{scriptsize}

\begin{tabular}{l|rrrr|rrrr}
\toprule
& \multicolumn{4}{c}{General distortions} & \multicolumn{4}{c}{Spherical distortions}\\\hline
{} &      EQW &      EQF &  k-means &   greedy  &     EQW &     EQF &  k-means &  greedy\\
\midrule
EQW      &  1 & 0 &  0 & 0 & 1 & 0 &  2.3e-07 & 2.8e-04\\
EQF      & 0 &  1 & 0 &  0 & 0 & 1 &  0 & 0\\
k-means  &  0 & 0 &  1 & 0 & 2.3e-07 & 0 &  1 & 5.7e-02\\
greedy   & 0 &  0 & 0 &  1 & 2.8e-04 & 0 &  5.7e-02 & 1 \\\hline
AUC &  0.63 &  0.5 &  0.67 &  0.76 & 0.83 & 0.5 & 0.84 & 0.84 \\
\bottomrule
\end{tabular}

\end{scriptsize}
\end{center}
\caption{The matrix of p-values and the AUC scores for the various binning techniques with general and spherical distortions (values smaller than $1e-6$ are rounded to $0$.)}
\label{res_kmeans2}
\end{table}


\subsection{Spherical distortions}

Again, we have executed 5000 experiments of the test cases described and plotted the results in Figure \ref{res_kmeans}.
The predictions of propositions \ref{st-noise} and \ref{st-reduced-2} for the means of the distributions and the means of the real scores are very well aligned, with the highest relative difference of $0.03\%$ for the noisy windows and $0.1\%$ for the distorted population. Despite the first-order approximation, the results suggest that Proposition \ref{st-reduced-2} gives a good approximation of the expected value in the scope of the experiments. 
Comparing the AUC scores of recognition plotted in Figure \ref{res_kmeans_b} and aggregated in Table \ref{res_kmeans2} with the p-values of the McNemar tests on the equality of the scores, one can observe that the k-means and greedy techniques outperform EQW (by 1\% in aggregation) and EQF (by 34\% in aggregation) with statistical significance, however, the improvement is mainly for low numbers of bins, the performances quickly converge to that of EQW and the greedy technique (due to its suboptimality) gets below EQW in terms of AUC when the square root rule is applied to determine the number of bins. 
The reason for the limited improvement in the case of spherical distortions is that according to Proposition $\ref{st-reduced-2}$, k-means and the greedy techniques improve the discrimination power of PWC nUV by minimizing the term $\Vert A\mathbf{t} - \mathbf{t}\Vert$ which is a smaller decrease than minimizing both this term and the $- \langle A, SCov(\mathbf{m})S^T\rangle_F$ as pointed out in Proposition \ref{st-reduced-1}.


\begin{figure}[t]
     \subfigure[\label{res_kmeans_a}]{\includegraphics[width=0.5\textwidth]{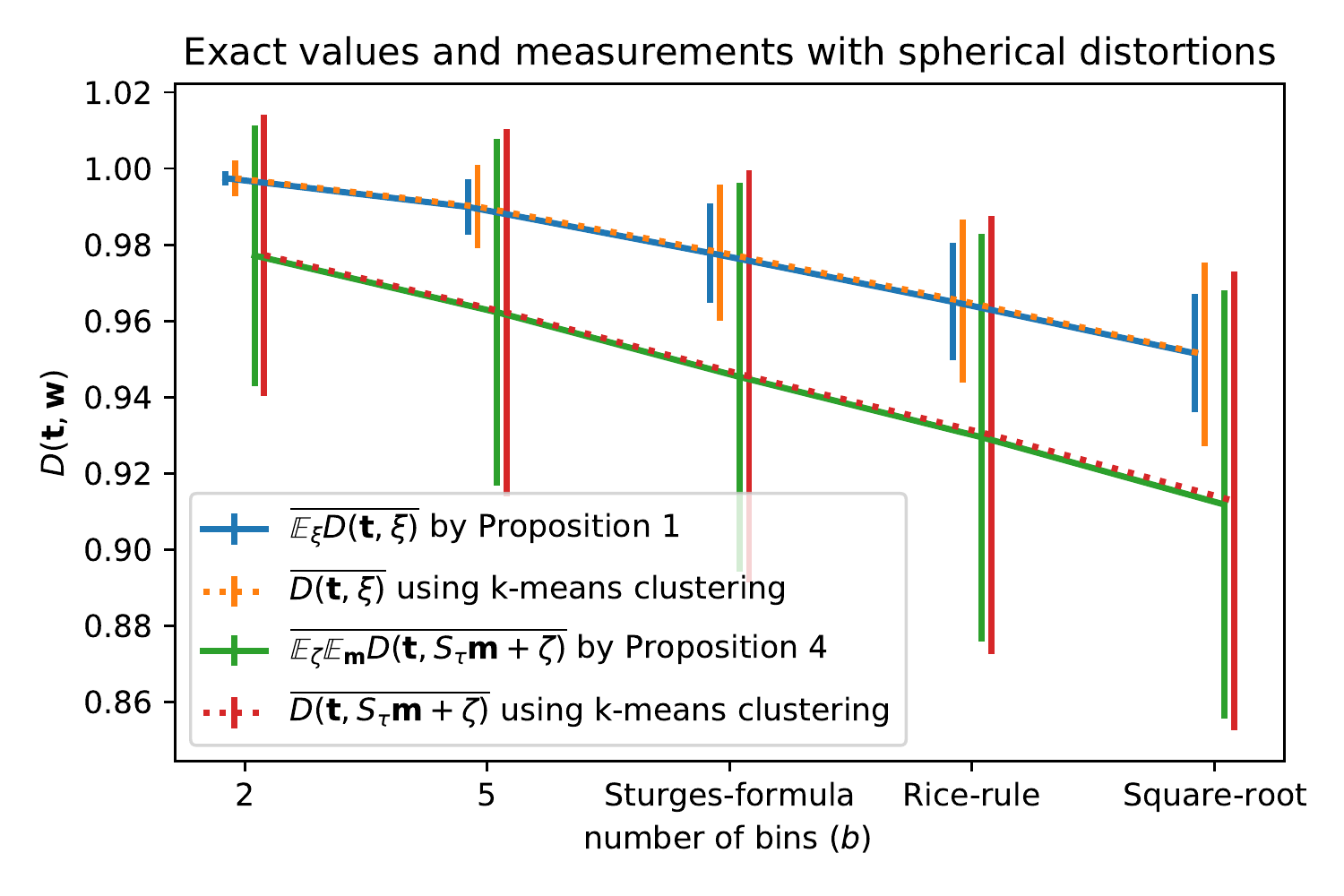}}
     \subfigure[\label{res_kmeans_b}]{\includegraphics[width=0.5\textwidth]{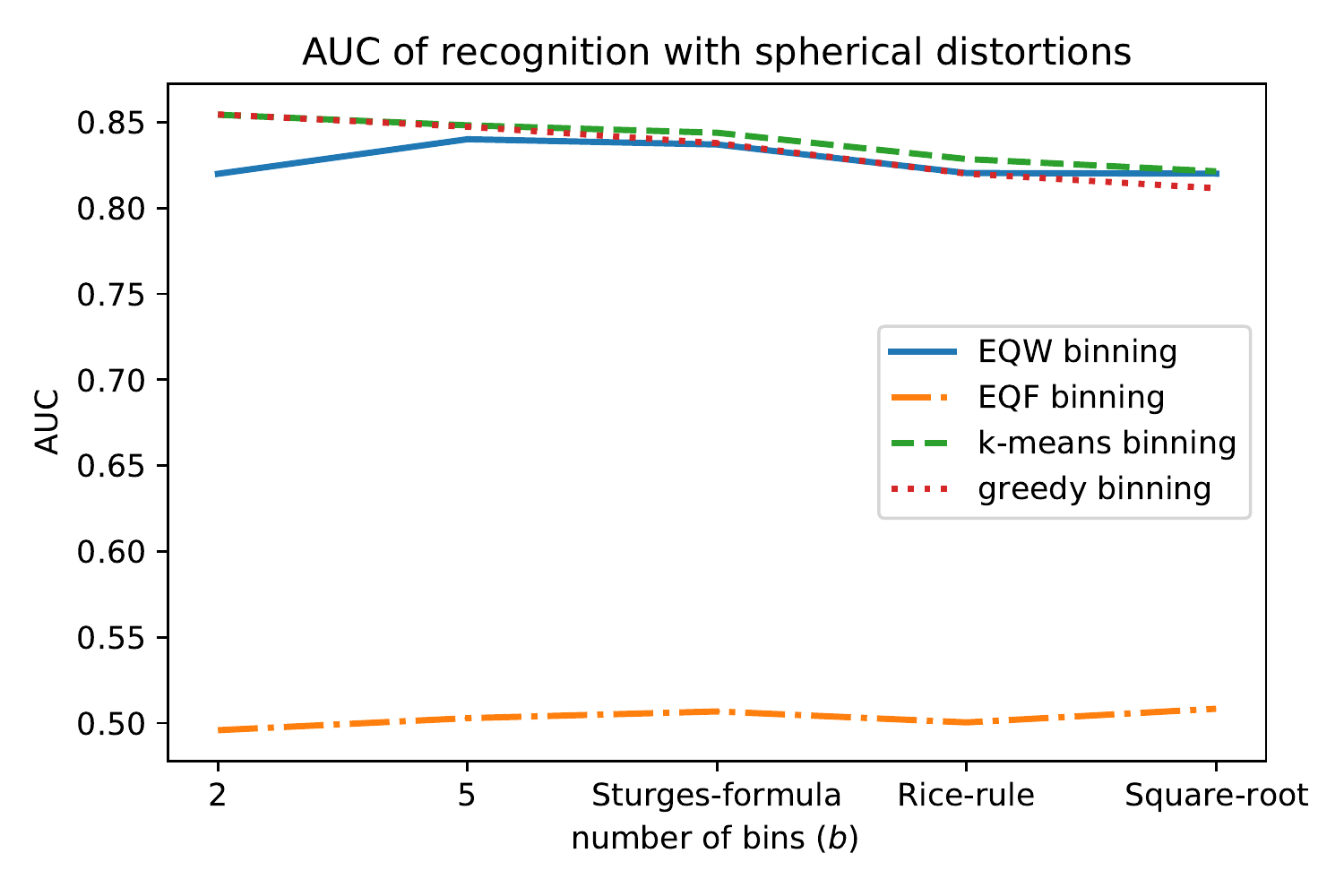}}
     
     \caption{Results for spherical distortions: fitting of the measurements to the theoretical values (a); AUC scores (b).}
     \label{res_kmeans}
\end{figure}

\subsection{A note on the low pattern recognition performance of EQF}
\label{low-eqf}
Interestingly, the AUC of EQF is 0.5 in both experiments, which means that it has no discriminative power in these settings. The operating principle of PWC nUV is that the slices describe the rough structure of the template as the values in the bins are close to each other. 
This assumption is definitely not satisfied by EQF, as having the same number of values in a bin completely neglects the structure of the template, can break many similar values into separate bins, providing a poor representation of the template. This phenomenon is a qualitative validation of our previous claim that binning techniques developed to reconstruct the empirical distribution function of a sample, do not necessarily perform well in other binning problems.

\section{Conclusions}
\label{secconc}

In this paper, we have examined the effect of binning strategies on the piecewise constant approximation of the normalized Unexplained Variance (nUV) (also known as MTM) dissimilarity measure. We defined the criterion of ideal operation in Definition \ref{def-optimality}, and managed to show in Theorem \ref{the-theorem} that the ideal binning needs to maximize the alignment of the projection matrix and the cross-product structure of the expected distortion. In order to obtain an approximate solution for this combinatorial optimization problem, we have proposed a greedy algorithm in Algorithm \ref{alg1}. In subsequent propositions, we have examined special cases of the general statement and arrived at the case of localized and spherically distributed distortions, for which the ideal binning can be determined by solving the k-means clustering problem according to Theorem \ref{th2}. 

In the Section \ref{secres} we have carried out experiments to see how much the simulation results are aligned with the first-order statistical approximations. According to the results, the relative error is less than 0.1\% in terms of the means of the figures. We also compared the performance of the proposed binning techniques to that of historical ones in pattern recognition scenarios and found the proposed approaches to outperform the historical ones by 13\% AUC in the case of general distortions and the greedy binning, and by 1\% AUC in the case of spherical distortions and k-means binning, in both cases with statistical significance.

The conclusions we can draw are summarized as follows.
Due to the analogies presented in Section \ref{sec-relation-to-mi}, nUV can be treated as a powerful alternative of MI, quantifying the uncertainty remaining about the window $\mathbf{w}$ given $\mathbf{t}$ in terms of variance. Thus, nUV is potentially applicable in any problem where MI is used as a similarity measure (template matching, registration, feature selection, etc.),
Although numerical experiments can never cover all the possible use cases of a general-purpose dissimilarity measure, due to the wide range of parameters used in the simulations, one can expect that using PWC nUV with the proposed binning techniques can improve its performance in terms of the AUC score.


\appendix

\section{Proof of Lemma \ref{lem-A}}
\label{prooflem1}
\begin{proof}
Due to the orthogonality of $S$, $S^TS$ is a diagonal matrix of type $\mathbb{Z}_+^{b\times b}$ with $(S^TS)_{ii}$ being equal to the cardinality of slice $i$. Inverting this matrix inverts the elements in the diagonal, with $(S^TS)^{-1}_{ii} = 1/\vert\mathcal{I}_i\vert$. Finally, due to the construction of $A$ and the orthogonality of $S$, one can readily see that in $A=S(S^TS)^{-1}S^T$, $A_{ij}$ is non-zero only if $i$ and $j$ fall in the same slice $\mathcal{I}_k$, and the value it takes is $1/\vert\mathcal{I}_k\vert$.
Due to the special structure of $A$, 
\begin{equation}
(A\mathbf{w})_i = \sum\limits_{j=1}^{d} A_{ij}\mathbf{w}_j = \sum\limits_{j\in\mathcal{I}_k}\dfrac{1}{\vert \mathcal{I}_k\vert} \mathbf{w}_j
\end{equation}
if $i\in\mathcal{I}_k$, which is the mean of elements of $\mathbf{w}$ in the slice $\mathcal{I}_k$ implied by $\mathbf{t}$.
\end{proof}
\section{Proof of Proposition \ref{st-noise}}
\label{proof-noise}
\begin{proof}
Most of the proofs in the paper are analogous, expanding the inner products in the expressions and simplifying them by utilizing the special properties of matrix $A$ highlighted in Lemma \ref{lem-A}. Due to space limitations, these steps are carried out only in this proof in all details.

According to subsection \ref{ratio-dist}, the numerator $\mathbb{E}_{\boldsymbol\xi}\mathbb{E}_{\mathbf{m}}\Vert A\mathbf{\boldsymbol\xi} - \boldsymbol\xi\Vert^2$ and the denominator $\mathbb{E}_{\boldsymbol\xi}\mathbb{E}_{\mathbf{m}}d\text{var}(\boldsymbol\xi)$ are evaluated separately.
The numerator is expanded as
\begin{equation}
\mathbb{E}_{\boldsymbol\xi} \Vert A\boldsymbol\xi - \boldsymbol\xi\Vert^2 = \mathbb{E}_{\boldsymbol\xi} [\langle A\boldsymbol\xi, A\boldsymbol\xi\rangle - 2\langle A\boldsymbol\xi, \boldsymbol\xi\rangle + \langle \boldsymbol\xi, \boldsymbol\xi \rangle].
\end{equation}
Evaluating the first term, utilizing Lemma \ref{lem-A} on the special properties of $A$, and the assumptions on the white noise ($0$ mean, finite $\sigma^2$ variance), one gets
\begin{equation}
\mathbb{E}_{\boldsymbol\xi}\left[\langle A\mathbf{\boldsymbol\xi}, A\mathbf{\boldsymbol\xi}\rangle\right]= \mathbb{E}_{\boldsymbol\xi}\left[\sum\limits_{i}\sum\limits_{j}\sum\limits_{k}A_{ij}A_{ik}\boldsymbol\xi_j\boldsymbol\xi_k\right]=\\\sum\limits_{i}\sum\limits_{j}A_{ij}^2\sigma^2=b\sigma^2.
\end{equation}
Similarly, $\mathbb{E}_{\boldsymbol\xi}\left[\langle A\boldsymbol\xi, \mathbf{t}\rangle\right]=b\sigma^2$ and $\mathbb{E}_{\boldsymbol\xi}\left\langle \boldsymbol\xi, \boldsymbol\xi\right\rangle=  d\sigma^2$.
For the denominator,
\begin{equation}
    \mathbb{E}_{\boldsymbol\xi} d\text{var}(\boldsymbol\xi)= \mathbb{E}_{\boldsymbol\xi} d\left( \dfrac{1}{d}\sum\limits_{i=1}^d \boldsymbol\xi_i^2 - \left(\dfrac{1}{d}\sum\limits_{i=1}^d\boldsymbol\xi_i\right)^2 \right) =  \sigma^2\left(d - 1\right).
\end{equation}
\end{proof}

\section{Proof of Proposition \ref{st-distorted}}
\label{proof-distorted}
\begin{proof}
First, we evaluate the numerator ($Num= \Vert A(S_{\boldsymbol\tau}\mathbf{m} + \boldsymbol\zeta) - (S_{\boldsymbol\tau}\mathbf{m} + \boldsymbol\zeta)\Vert^2$):
Expanding the inner product and carrying out the integration for $\boldsymbol\zeta$ (zero-mean white noise with finite variance $\sigma^2$) leaves the following non-zero terms.
\begin{equation}
\label{tmp000}
\mathbb{E}_{\boldsymbol\zeta}\mathbb{E}_\mathbf{m} Num = \mathbb{E}_\mathbf{m} \left(\Vert AS_{\boldsymbol\tau}\mathbf{m}\Vert^2 + \Vert S_{\boldsymbol\tau}\mathbf{m}\Vert^2 - 2\langle AS_{\boldsymbol\tau}\mathbf{m}, S_{\boldsymbol\tau}\mathbf{m}\rangle + \sigma^2(d - b)\right)
\end{equation}
Due to the idempotence of $A$, $\langle AS_{\boldsymbol\tau}\mathbf{m}, S_{\boldsymbol\tau}\rangle = \langle AS_{\boldsymbol\tau}\mathbf{m}, AS_{\boldsymbol\tau}\mathbf{m}\rangle$, thus, 
\begin{equation}
\mathbb{E}_{\boldsymbol\zeta}\mathbb{E}_\mathbf{m} Num = \mathbb{E}_\mathbf{m} \left(\Vert S_{\boldsymbol\tau}\mathbf{m}\Vert^2 - \langle AS_{\boldsymbol\tau}\mathbf{m}, S_{\boldsymbol\tau}\mathbf{m}\rangle + \sigma^2(d - b)\right)
\end{equation}
Let $\text{Cross}(\mathbf{m})_{ij} = \mathbb{E}_\mathbf{m}\mathbf{m}_i\mathbf{m}_j$. Carrying out the integration for $\mathbf{m}$,
\begin{equation}
\mathbb{E}_{\boldsymbol\zeta}\mathbb{E}_\mathbf{m} Num = \langle \mathbf{n}_{\boldsymbol\tau}, \mathbb{E}_\mathbf{m}\mathbf{m}^2\rangle - \langle A, S_{\boldsymbol\tau} \text{Cross}(\mathbf{m}) S_{\boldsymbol\tau}^T\rangle_{F} + \sigma^2(d - b),
\end{equation}
for the expectation of the numerator, where $\langle, \rangle_F$ denotes the Frobenius inner product.
Similarly for the denominator, 
utilizing the special properties of $S_{\boldsymbol\tau}$:
\begin{multline}
\mathbb{E}_{\boldsymbol\zeta}\mathbb{E}_\mathbf{m} d\text{var}(S_{\boldsymbol\tau}\mathbf{m} + \boldsymbol\zeta) = \mathbb{E}_{\boldsymbol\zeta}\mathbb{E}_\mathbf{m}d\left[\text{var}(S_{\boldsymbol\tau}\mathbf{m}) + \text{var}(\boldsymbol\zeta)\right]=\\ 
\langle \mathbf{n}_{\boldsymbol\tau}, \mathbb{E}_\mathbf{m}\mathbf{m}^2\rangle - \dfrac{1}{d}\mathbf{n}_{\boldsymbol\tau} Cross(\mathbf{m}) \mathbf{n}_{\boldsymbol\tau}^T + \sigma^2\left(d - 1\right).
\end{multline}
\end{proof}

\section{Proof of Proposition \ref{st-reduced-1}}
\label{proof-reduced-1}
\begin{proof}
$\mathbf{E}_\mathbf{m}\mathbf{m} = \boldsymbol\tau$ implies $\mathbf{E}_\mathbf{m}S_{\boldsymbol\tau}\mathbf{m} = \mathbf{t}$. One can change the variable of integration from $\mathbf{m}$ to $\mathbf{m}' = \mathbf{m} - \boldsymbol\tau$ with $\mathbb{E}_\mathbf{m} \mathbf{m}' = 0$. Substituting $\mathbf{m} = \mathbf{m}' + \boldsymbol\tau$ into the first two terms of the numerator of (\ref{the-solution}),
\begin{align}
\langle \mathbf{n}_{\boldsymbol\tau}, \mathbb{E}_\mathbf{m}\left[(\mathbf{m}' + {\boldsymbol\tau})^2\right]\rangle &= \langle \mathbf{n}_{\boldsymbol\tau}, \mathbb{E}_\mathbf{m} \mathbf{m}'^2\rangle + \langle \mathbf{t}, \mathbf{t}\rangle,\\
\langle A, S_{\boldsymbol\tau} Cross(\mathbf{m}' + \boldsymbol\tau)S_{\boldsymbol\tau}^T\rangle_F &= \langle A, S_{\boldsymbol\tau} Cov(\mathbf{m}')S_{\boldsymbol\tau}^T\rangle_F + \langle A\mathbf{t}, \mathbf{t}\rangle.
\end{align}
The last terms ($\langle \mathbf{t}, \mathbf{t}\rangle$, $\langle A\mathbf{t}, \mathbf{t}\rangle$) can be transformed to $\Vert A\mathbf{t} - \mathbf{t}\Vert^2$, by adding and subtracting $\langle A\mathbf{t}, \mathbf{t}\rangle$, and utilizing that $\langle A\mathbf{t}, \mathbf{t}\rangle = \langle A\mathbf{t}, A\mathbf{t}\rangle$, due to the idempotence of $A$.
For the denominator,
\begin{multline}
\mathbb{E}_{\boldsymbol\zeta}\mathbb{E}_\mathbf{m}d\text{var}(S_{\boldsymbol\tau}\mathbf{m} + \boldsymbol\zeta) = \mathbb{E}_{\boldsymbol\zeta}\mathbb{E}_\mathbf{m}d\text{var}(\mathbf{t}+S_{\boldsymbol\tau}\mathbf{m}' + \boldsymbol\zeta)=\\
d\text{var}(\mathbf{t}) + \mathbb{E}_{\boldsymbol\zeta}\mathbb{E}_\mathbf{m}\left[d\text{var}(S_{\boldsymbol\tau}\mathbf{m}') + d\text{var}(\boldsymbol\zeta))\right].
\end{multline}
Utilizing the results of Proposition \ref{st-distorted} and $Cross(\mathbf{m}') = Cov(\mathbf{m}')$ , one gets
\begin{equation}
    \mathbb{E}_{\boldsymbol\zeta}\mathbb{E}_\mathbf{m}d\text{var}(S_{\boldsymbol\tau}\mathbf{m} + \boldsymbol\zeta)= \\ d\text{var}(\mathbf{t}) + \langle \mathbf{n}_{\boldsymbol\tau}, \mathbb{E}_\mathbf{m}\mathbf{m'}^2\rangle - \dfrac{1}{d}\mathbf{n}_{\boldsymbol\tau} Cov(\mathbf{m'}) \mathbf{n}_{\boldsymbol\tau}^T + \sigma^2\left(d - 1\right).
\end{equation}

\end{proof}


\section*{References}




\bibliographystyle{model2-names}
\bibliography{refs}

%

\end{document}